\documentclass[twoside,11pt]{article}

\usepackage{blindtext}

%

%
%
%

\usepackage[preprint]{jmlr2e}

\usepackage{amsmath, amssymb, amsfonts, mathtools, nccmath}
\usepackage{amsfonts}
\usepackage{float}
\usepackage{subcaption}
\usepackage{enumitem}
\usepackage{booktabs}
\usepackage{multirow}
\usepackage{arydshln} 
\usepackage{algorithm}
\usepackage{algpseudocode}
\usepackage{tikz}
\usetikzlibrary{arrows.meta,bending,shapes,arrows,positioning,calc,chains,shapes.symbols,decorations.pathmorphing,snakes}
\tikzset{
    vertex/.style={circle, draw, inner sep=2pt, minimum size=20pt}, 
    edge/.style={-latex} 
    }


\def\cum{{\text{cum}}}
\newcommand{\cumeta}[2]{{\omega^{(#1)}_{#2 \dots #2}}} 
\def\pa{{\text{pa}}}
\def\R{{\mathbb{R}}}

\def\comp{{\mathsf{C}}}

\graphicspath{{./Graphics/}}


\usepackage{lastpage}
\jmlrheading{23}{2024}{1-\pageref{LastPage}}{1/21; Revised 5/22}{9/22}{21-0000}{Daniela Schkoda, Elina Robeva and Mathias Drton}


\ShortHeadings{Causal Discovery with Unobserved Confounding}{Schkoda, Robeva and Drton}
\firstpageno{1}

\begin{document}

\title{Causal Discovery of Linear Non-Gaussian Causal Models with Unobserved Confounding}

\author{\name Daniela Schkoda \email daniela.schkoda@tum.de \\
       \addr TUM School of Computation, Information and Technology\\
       Technical University of Munich\\
       85748 Garching bei M\"unchen, Germany
       \AND
       \name Elina Robeva \email erobeva@math.ubc.ca \\
       \addr Department of Mathematics \\University of British Columbia\\
       Vancouver, BC V6T 1Z2,  Canada
       \AND
       \name Mathias Drton \email mathias.drton@tum.de \\
       \addr TUM School of Computation, Information and Technology \& Munich Center for Machine Learning\\
       Technical University of Munich\\
       85748 Garching bei M\"unchen, Germany}

\editor{My editor}

\maketitle

\begin{abstract}
We consider linear non-Gaussian structural equation models that involve latent confounding. In this setting, the causal structure is identifiable, but, in general, it is not possible to identify the specific causal effects. Instead, a finite number of different causal effects result in the same observational distribution. 
Most existing algorithms for identifying these causal effects use overcomplete independent component analysis (ICA), which often suffers from convergence to local optima. Furthermore, the number of latent variables must be known a priori. To address these issues, we propose an algorithm that operates recursively rather than using overcomplete ICA. The algorithm first infers a source, estimates the effect of the source and its latent parents on their descendants, and then eliminates their influence from the data. For both source identification and effect size estimation, we use rank conditions on matrices formed from higher-order cumulants. We prove asymptotic correctness under the mild assumption that locally, the number of latent variables never exceeds the number of observed variables. Simulation studies demonstrate that our method achieves comparable performance to overcomplete ICA even though it does not know the number of latents in advance.
\end{abstract}

\begin{keywords}
  Causal discovery, latent confounding, linear non-Gaussian model, structural equation model, independent component analysis
\end{keywords}

\section{Introduction}
Linear non-Gaussian acyclic models are a powerful framework for causal inference \citep{Shimizu2022}. Their non-Gaussianity renders the causal structure identifiable; consequently, the models form the foundation for many algorithms for causal discovery. However, when some of the variables are unobserved, inference becomes more involved due to possible latent confounding. While the topological order can still be uniquely determined, the causal effects generally cannot. This paper proposes a recursive approach that can deduce the causal structure and all possible causal effects based solely on observational data and allowing for the possibility of latent confounding.

We denote the observed variables by $X = (X_1, \ldots, X_p)$ and the unobserved variables by $L = (L_1, \dots, L_\ell)$. The causal structure is represented by a directed acyclic graph (DAG) $\mathcal{G} = (V, E)$, where each vertex in the vertex set $V = \{X_1, \ldots, X_p\} \cup \{L_1, \ldots, L_\ell\}$ corresponds to one of the variables and the edges in $E \subseteq V \times V$ represent direct causal effects. The latent variable  $\{L_1,\dots,L_\ell\}$ are assumed to be independent latent factors represented as source nodes in $\mathcal{G}$; compare, e.g., \cite{BarberDSW2022} or the discussion of canonical models in \cite{Salehkaleybar2020}. The linear non-Gaussian acyclic model then postulates that \begin{align}\label{def:lingam_latent_conf}
X_i = \sum_{X_j \in \text{pa}(i)} \lambda_{ij} X_j + \sum_{L_j \in \text{pa}(i)}  \gamma_{ij} L_{j}  + \epsilon_i, \quad (i=1,\ldots,p),
\end{align} 
where all $\epsilon_j$ and $L_j$ are mutually independent and non-Gaussian, and $\text{pa}(i)=\{j\in V:(j,i)\in E\}$ is the set of parents of vertex $i$ in the graph $\mathcal{G}$. The coefficients $\lambda_{ij}\in\mathbb{R}$ are parameters quantifying the direct causal effects among the observed variables, and the $\gamma_{ij}\in\mathbb{R}$ similarly constitute direct effects originated from latent variables. We emphasize that the graph is assumed to be acyclic and the system in \eqref{def:lingam_latent_conf} has a unique solution $X$ for a given choice of $L$, $\epsilon$, $\lambda_{ij}$ and $\gamma_{ij}$.
\subsection{Related Work}
For the case $\ell=0$, where all variables are observed, \cite{Shimizu2006} show that the causal structure, as well as all causal effects, are identifiable in the sense that there is a unique DAG and a unique choice of edge weights $\lambda_{ij}$ that lead to the observed distribution. Moreover, they propose the method ICA-LiNGAM to estimate both. Its idea is to rewrite \eqref{def:lingam_latent_conf} as $X = B \epsilon$ for $B = (I-\Lambda)^{-1}$ and then estimate the matrices $B$ and $\epsilon$ using an independent component analysis algorithm. We refer to the recent account of \cite{Auddy2023} for more details on ICA. Since most ICA algorithms use gradient descent, there are no guarantees that the algorithm will indeed converge to the true solution. Furthermore, the method is not scale invariant \citep{Shimizu2011}.  To improve on these problems, \cite{Shimizu2011} propose the alternative method DirectLiNGAM, which recursively identifies a source node and estimates the causal effects using regression and independence tests. The effect of the source is then removed from the data, and the procedure continues until all nodes are identified; see also \cite{WangD2020}, where the approach is customized to sparse high-dimensional settings. 

The ideas behind the two LiNGAM algorithms are also the basis for many algorithms for problems with an arbitrary number of latent variables $\ell$. The methods of \cite{Hoyer2008} and
\cite{Salehkaleybar2020} use overcomplete ICA. To this end, they rewrite equation \eqref{def:lingam_latent_conf} as
\begin{align*}X=B \eta,\end{align*}
where $\eta = (\epsilon, L)$ is the vector of all exogenous sources, and we define the path matrix $B = (I_p - \Lambda)^{-1} \begin{pmatrix}
I_p & \Gamma \end{pmatrix}$. While the approach by \cite{Hoyer2008} can only estimate the causal effects between pairs of variables with no common confounders. \cite{Salehkaleybar2020} does not impose any assumption on the graph and finds the causal order and all causal effects compatible with the data. However, in practice, the overcomplete ICA method is far more susceptible to both optimization and statistical errors than the standard ICA approach in the fully observed case. 

Therefore, other methods work in the vein of DirectLiNGAM. Similar to DirectLiNGAM, the methods IvLiNGAM \citep{Entner2010}, ParcelLiNGAM \citep{Tashiro2014}, RCD \citep{Maeda2020}, and BANG \citep{Wang2023} all rely on independence tests of certain residuals. IvLiNGAM can find the order and effect sizes in all subsets unaffected by confounding. ParcelLiNGAM can fully discover the structure for all ancestral graphs, meaning there is no confounding between each observed variable and any of its ancestors. RCD works for arbitrary graphs, but for confounded pairs of observed variables, they do not detect the causal direction. 
BANG can find the causal order and all causal effects for all DAGs where no parent-child pair is affected by confounding. In contrast to all the other methods, BANG allows for non-linear confounding.

The method most similar to the approach we propose in this paper is that of \cite{Cai2023}. Like DirectLiNGAM, it works recursively, but unlike DirectLiNGAM, it does not test for independence of residuals. Instead, using conditions involving cumulants, \cite{Cai2023} find structures of the form $X_i \leftarrow L \rightarrow X_j$ or $X_i \to X_j$, estimate the coefficients in these structures, remove the effect of the latent, and continue. For this to work, one of these two structures must exist in each iteration. So, they have to make assumptions about the true graph, including that each latent has at least three observed children, amongst which one child is unaffected by any other latents. In related work, \cite{Chen2024} focus on the case of two observed variables and one latent variable and propose a method to infer the direction and the effect size between the two observed variables.
\subsection{Contribution}
In this paper, we extend the approaches just mentioned and prove conditions for finding a source in a completely arbitrary DAG. Once a source is found, we estimate its effects on its descendants using polynomial equations for the edge weights. Then, its effect is removed from the data, and the procedure continues until the entire structure is discovered. Our method finds the whole structure whenever locally the number of observed nodes is higher than the number of latents, as made precise in Lemma \ref{lem:regression_possible}.  Beyond its primary application, the algorithm can be used for causal effect identification, where one is interested in a single causal effect. While \cite{Tramontano2024} provides graphical criteria to decide whether a specific effect is identifiable, we are, to our knowledge, the first to present a formula for these effects in terms of the cumulants rather than using overcomplete ICA.
Alongside our proposed method, we prove a formula for the number of choices for edge weights that lead to the same observed distribution.

\subsection{Notation} Let $i,j \in \mathbb{N}$. With $[i]=\{1, \dots, i\}$ we refer to the set of all natural numbers up to $i$. The $i$th basis vector is denoted by $e_i \in \mathbb{R}^p$. Given a matrix $A \in \mathbb{R}^{p \times q}$,  $A_{:,j}$ stands for its $j$th column, $A_{i:,:}$ for its submatrix selecting all rows starting from row $i$, $A_{:i,:}$ for its submatrix selecting all rows up to and including row $i$, and $A_{-1,:}$ for the submatrix with the last row removed, similarly for column selection. We write $I_p$ for the identity matrix in $\mathbb{R}^{p \times p}$ and the Kronecker delta function is given by
\begin{equation*}
    \delta_{ij} = \begin{cases}
        1 \text{ if }i=j,\\
        0 \text{ otherwise.}
    \end{cases}
\end{equation*}

\section{Preliminaries}

In this section, we provide background and preliminary results on linear structural equation models with latent variables as well as on higher-order cumulant tensors.
Throughout, we use standard terminology in graphical modeling; see, e.g., \citet[Part I]{handbook}.

\subsection{Linear Structural Equation Models}\label{sec:lin_sem}

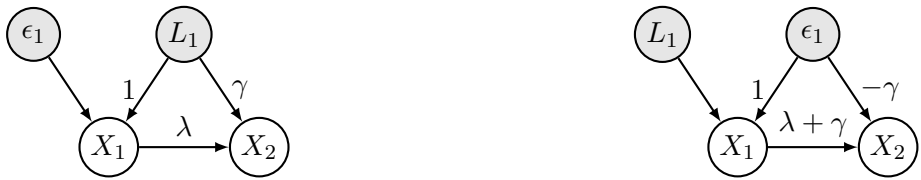
\begin{figure}
  \centering
  \begin{subfigure}[b]{0.4\textwidth}
    \centering
    \begin{tikzpicture}[thick]
      \node[vertex] (1) at (-1,0) {$X_1$};
      \node[vertex] (2) at (1,0) {$X_2$};
      \node[vertex, fill=gray!20] (3) at (0,1.5) {$L_1$};
      \node[vertex, fill=gray!20] (4) at (-2,1.5) {$\epsilon_1$};
      \draw[edge] (1) -- node[above] {$\lambda$} (2);
      \draw[edge] (3) -- node[left] {$1$} (1);
      \draw[edge] (4) -- (1);
      \draw[edge] (3) -- node[right] {$\gamma$} (2); 
    \end{tikzpicture}
  \end{subfigure}
  \hspace{2cm}
  \begin{subfigure}[b]{0.4\textwidth}
    \centering
    \begin{tikzpicture}[thick]
      \node[vertex] (1) at (-1,0) {$X_1$};
      \node[vertex] (2) at (1,0) {$X_2$};
      \node[vertex, fill=gray!20] (3) at (0,1.5) {$\epsilon_1$};
      \node[vertex, fill=gray!20] (4) at (-2,1.5) {$L_1$};
      \draw[edge] (1) -- node[above] {$\lambda+\gamma$} (2);
      \draw[edge] (3) -- node[left] {$1$} (1);
      \draw[edge] (4) -- (1);
      \draw[edge] (3) -- node[right] {$-\gamma$} (2); 
    \end{tikzpicture}
  \end{subfigure}
  \caption{Two parameter sets yielding the same observed distribution.}
  \label{fig:non-identifiability}
\end{figure} 

Due to \cite{Hoyer2008}, every linear structural equation model can be transformed in a way that each latent has no parents and at least two children,  while leaving the observed distribution as well as the total causal effects among observed variables the same. So, we restrict ourselves to this case. When clear from context, we may use the shorthand $v$ instead of $X_v$. For an  observed node $v$, we partition the set of its parents into $\pa(v)=\text{pa}_{\text{o}}(v)\cup \text{pa}_{\text{l}}(v)$, where $\text{pa}_{\text{o}}(v)=\pa(v) \cap \{X_1,\dots,X_p\}$ is the set of its observed parents and $\text{pa}_{\text{l}}(v)=\pa(v) \cap \{L_1,\dots,L_\ell\}$ the set of its latent parents.
Then, the linear  structural equation model $\mathcal{M}(\mathcal{G})$ for the graph $\mathcal{G}$ is the set of all joint distributions $P^X$ of observed random vectors $X=(X_1, \dots, X_p)$ solving the structural equations
\begin{align}\label{eq:introduction_latent_conf}
X_i = \sum_{X_j\in \text{pa}_{\text{o}}(i)} \lambda_{ij} X_j + \sum_{L_j \in \text{pa}_{\text{l}}(i)}  \gamma_{ij} L_j  + \epsilon_i, \quad (i=1,\ldots,p),
\end{align} for a choice of real coefficients $\lambda_{ij}$ and $\gamma_{ij}$ and random vectors $\epsilon = (\epsilon_1, \ldots, \epsilon_p)$ and $L = (L_1, \ldots, L_\ell)$ that are mutually independent, with independent and non-Gaussian components. 

In the sequel, we assume that all moments of $\epsilon$ and $L$ up to some order $k$ are finite. Without loss of generality, we assume that random vectors $\epsilon$ and $L$ are centered. We can arbitrarily rescale each latent variable $L_j$ by some $\alpha_j \neq 0$ and rescale all $\gamma_{ij}$ by $\alpha_j^{-1}$ at the same time without changing $P^X$. To fix the scale, for each $j=p+1, \dots, p+\ell$, we set $\gamma_{ij}=1$ for $i$ an oldest child among all children of $L_j$. 
Moreover, we can always relabel the latents without changing the observed distribution. Therefore, when discussing identifiability, we always mean identifiability up to permuting the latents and choosing a different oldest child $i$ to set $\gamma_{ij}=1$. Lastly, throughout the paper, we assume that all non-zero coefficients in \eqref{eq:introduction_latent_conf}, as well as the cumulants of  $\eta=(\epsilon,L)$, are generic (in particular, our results hold for Lebesgue almost every choice of coefficients and cumulants).

By collecting the causal effects in the matrices $\Lambda = (\lambda_{ij}) \in \mathbb{R}^{p \times p}$ and  $\Gamma = (\gamma_{ij}) \in \mathbb{R}^{p \times \ell}$, we can rewrite \eqref{eq:introduction_latent_conf} as
\begin{align*}
X = \Lambda X + \Gamma L + \epsilon
\end{align*}
or, equivalently,
$$X = B \eta$$
for  $\eta = (\epsilon, L)$ the vector of all exogenous sources and the so-called path matrix $B = (I_p - \Lambda)^{-1} \begin{pmatrix}
  I_p & \Gamma
\end{pmatrix}$. Each entry $b_{ij}$ of $B$ encodes the total causal effect from the exogenous source $\eta_j$ to $X_i$. The matrix $B$ is in one-to-one correspondence with the pair $(\Lambda, \Gamma)$ since they can be recovered as
\begin{align}\label{eq:recover_Lambda_Gamma}
  \Lambda = I_p - \left(B_{:, :p}\right)^{-1}, \quad \Gamma=(I_p-\Lambda)B_{:, (p+1):}.
  \end{align}
Note that the genericity assumption we make ensures faithfulness, meaning that whenever there is path from $i$ to $j$, $b_{ji}\neq 0$. Intuitively, this means that the causal effects from $\eta_j$ going through different paths to $X_i$ are not canceled out. 
Under this faithfulness assumption, \cite{Salehkaleybar2020} showed that the causal order, as well as the number of latents, is uniquely identifiable from the distribution. Moreover, they proved that the number of choices for $B$ compatible with $X$ in the sense that there exists some $\eta$ with independent components and $X=B\eta$ is given by
\begin{align*}
n_\mathcal{G} = \prod_{v \text{ observed node}} |\text{exog}(v)| + 1,
\end{align*} where 
\[
\text{exog}(v) = \{L_j \in \text{pa}_{\text{l}}(v): L_j \text{ has the same observed descendants as }v\}.
\]

\begin{example}
    The graph depicted in Figure \ref{fig:non-identifiability} has $n_\mathcal{G}=2$. The two feasible choices of parameters giving the same distribution are the following: If
\begin{align*}
X = \Lambda X + \Gamma L + \epsilon
\end{align*}
then choosing
\begin{align*}
\lambda' &= \lambda + \gamma , &
 \gamma' &= -\gamma, \\
L_1' &= \epsilon_1, &
\epsilon_1' &= L_1
\end{align*}
 gives the same observed distribution since
 \begin{alignat*}{3}
 X_1 &= L_1 + \epsilon_1 & &= L_1' + \epsilon_1',\\
 X_2 &= \lambda X_1 + \gamma L_1 + \epsilon_2 &  &= \lambda' X_1 + \gamma' L_1' + \epsilon_2.
 \end{alignat*}
Intuitively, the non-identifiability corresponds to swapping the roles of $L_1$ and $\epsilon_1$, the two exogenous sources pointing to $X_1$.
\end{example}
    
The example just given generalizes. Each of the $n_\mathcal{G}$ choices can be obtained by swapping the $v$th and $(j+p)$th  columns in $B$, where $v$ is some observed node and $L_j \in \text{exog}(v)$. At the same time, the corresponding elements in $\eta$ need to be swapped. The reason that precisely such pairs of columns can be swapped can be seen when looking at the sparsity pattern of $B$: the $v$th column of $B$ always needs to have zeros for all non-descendants of $v$. Apart from the original $v$th column, the only other columns with this property are the columns for the latents $L_j \in \text{exog}(v)$.

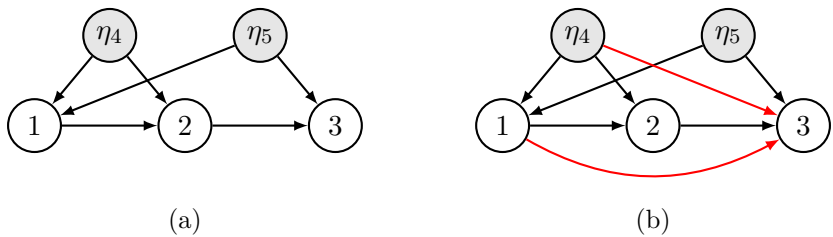
\begin{figure}
  \centering
  \begin{subfigure}[b]{0.4\textwidth}
    \centering
    \begin{tikzpicture}[thick, baseline=(L1).base]
              \node[vertex] (1) at (-2,0) {$1$}; 
              \node[vertex, fill=gray!20] (L1) at (-1,1.2) {$\eta_4$};
              \node[vertex] (3) at (2,0) {$3$};
              \node[vertex] (2) at (0,0) {$2$};
              \node[vertex, fill=gray!20] (L2) at (1,1.2) {$\eta_5$};
              \draw[edge, white] (1) to[bend right] (3);
              \draw[edge] (1) -- (2);
              \draw[edge] (L1) -- (2);
              \draw[edge] (L1) -- (1);
              \draw[edge] (L2) -- (1);
              \draw[edge] (L2) -- (3);
              \draw[edge] (2) -- (3);
          \end{tikzpicture}
          \subcaption{ }\label{fig:Additional_edges_original_graph}
  \end{subfigure}
  \begin{subfigure}[b]{0.4\textwidth}
    \centering
    \begin{tikzpicture}[thick, baseline=(L1).base]
              \node[vertex] (1) at (-2,0) {$1$}; 
              \node[vertex, fill=gray!20] (L1) at (-1,1.2) {$\eta_4$};
              \node[vertex] (3) at (2,0) {$3$};
              \node[vertex] (2) at (0,0) {$2$};
              \node[vertex, fill=gray!20] (L2) at (1,1.2) {$\eta_5$};
              \draw[edge] (1) -- (2);
              \draw[edge, red] (1) to[bend right] (3);
              \draw[edge] (L1) -- (2);
              \draw[edge] (L1) -- (1);
              \draw[edge] (L2) -- (1);
              \draw[edge, red] (L1) -- (3);
              \draw[edge] (L2) -- (3);
              \draw[edge] (2) -- (3);
          \end{tikzpicture}
          \subcaption{ }\label{fig:Additional_edges_denser_graph}
  \end{subfigure}
          \caption{New edges introduced by swapping exogenous sources.}\label{fig:Additional_edges}
  \end{figure}

However, while these $n_\mathcal{G}$ possible path matrices have the same sparsity pattern, they do not always yield parameters $(\Lambda, \Gamma)$ having the same sparsity pattern. For example, the path matrix for the graph in Figure \ref{fig:Additional_edges_original_graph} is
  \begin{align*}
    B = \begin{pmatrix}1 & 0 & 0 & 1 & 1\\\lambda_{21} & 1 & 0 & \lambda_{21}+\gamma_{21} & \lambda_{21}\\
    \lambda_{21} \lambda_{32} & \lambda_{32} & 1 & \gamma_{21} \lambda_{32} + \lambda_{21} \lambda_{32} & \gamma_{32} + \lambda_{21} \lambda_{32}\end{pmatrix}.
    \end{align*}
Swapping the first and last column to obtain another compatible path matrix $B'$ and passing back to $\Gamma', \Lambda'$, we obtain
\begin{align*}
  \Lambda' &=  \begin{pmatrix}0 & 0 & 0\\\lambda_{21} & 0 & 0\\\gamma_{32} & \lambda_{32} & 0\end{pmatrix}, 
  & \Gamma' &= \begin{pmatrix}1 & 1\\\gamma_{21} & 0\\- \gamma_{32} & - \gamma_{32}\end{pmatrix}
\end{align*}
belonging to the denser graph in Figure \ref{fig:Additional_edges_denser_graph}.
Since our main interest lies in finding all compatible $(\Lambda, \Gamma)$ that are as sparse as possible, we want to examine how many such choices exist.
Denote by $\text{sib}(v)$ the set of all nodes who share at least one common latent or observed parent with $v$.

\begin{lemma}\label{lem:number_of_sparsest_choices} If $P^X \in \mathcal{M}_\mathcal{G}$ is defined via generic coefficients, then $\mathcal{G}$ is the unique minimal graph such that $P^X \in \mathcal{M}_\mathcal{G}$ and the number of choices for $(\Lambda, \Gamma)$ compatible with $P^X$ and $\mathcal{G}$ is
  \begin{align*}
    n_\mathcal{G, \text{sparse}}=\prod_{v \text{ observed node}} |\{L \in \text{exog}(v): \text{sib}(v)\subseteq \text{ch}(L)\subseteq \text{ch}(v) \cup \{v\}\}| + 1.
    \end{align*}
\end{lemma}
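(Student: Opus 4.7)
My approach leverages the characterization of the $n_\mathcal{G}$ compatible path matrices $B$ recalled in the text and asks which of these yield a pair $(\Lambda,\Gamma)$ whose sparsity matches $\mathcal{G}$. Uniqueness of the minimal $\mathcal{G}$ follows from Salehkaleybar et al.'s identifiability of the causal order and the number of latents, combined with faithfulness from the genericity of the coefficients: the parent sets of every observed node can be read off from $P^X$, so no strictly smaller graph induces it.

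By the cited characterization, each of the $n_\mathcal{G}$ path matrices arises from an independent choice at each observed $v$: either leave column $v$ alone or swap it with column $p+j$ for some $L_j\in\mathrm{exog}(v)$. A single swap at $v$ corresponds, via Sherman--Morrison applied to $M=(I-\Lambda)^{-1}$, to a rank-one perturbation of $\Lambda$ supported on rows indexed by $\mathrm{ch}(L_j)$ and on columns indexed by $\{v\}\cup\pa_o(v)$, together with an analogous update to the $j$-th column of $\Gamma$ involving $\pa_l(v)\setminus\{L_j\}$. Since swaps at distinct $v$ operate on disjoint column blocks of $B$ and genericity rules out cross-cancellations, the composite graph equals $\mathcal{G}$ iff every individual swap preserves $\mathcal{G}$, giving the product $n_{\mathcal{G},\mathrm{sparse}}=\prod_v(|S_v|+1)$ with $S_v\subseteq\mathrm{exog}(v)$ the set of $L_j$ whose single swap preserves $\mathcal{G}$.

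To pin down $S_v$, I fix $(v,L_j)$, solve $v$'s structural equation for $L_j$ (using $\gamma_{vj}\ne 0$), and substitute into each equation of $w\in\mathrm{ch}(L_j)\setminus\{v\}$. This introduces a term $\tfrac{\gamma_{wj}}{\gamma_{vj}}\bigl[X_v-\sum_{u\in\pa_o(v)}\lambda_{vu}X_u-\sum_{L_{k'}\in\pa_l(v)\setminus\{L_j\}}\gamma_{vk'}L_{k'}-\epsilon_v\bigr]$ into $w$'s new equation, where the last factor $\epsilon_v$ becomes the new latent source $L_j^{\mathrm{new}}$. Under genericity each introduced coefficient is nonzero and creates a new edge unless it lands on one already in $\mathcal{G}$. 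Collecting the resulting constraints across all $w\in\mathrm{ch}(L_j)\setminus\{v\}$ and rewriting them through the definitions of $\mathrm{sib}(v)$ and $\mathrm{ch}(\cdot)$ reproduces the two-sided inclusion $\mathrm{sib}(v)\subseteq\mathrm{ch}(L_j)\subseteq\mathrm{ch}(v)\cup\{v\}$ of the lemma, completing the counting.

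The main obstacle is the bookkeeping in this last step: tracking exactly which entries of the new $\Lambda'$ and $\Gamma'$ can gain support, verifying that all four families of introduced terms (for $X_v$, observed parents of $v$, other latent parents of $v$, and the swapped source) admit a joint reformulation as the stated sibling containment, and checking sufficiency --- that whenever the two-sided inclusion holds, every substituted term genuinely coincides with a pre-existing edge and no new edges emerge.
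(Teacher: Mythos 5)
Your overall route is the same as the paper's: take the known characterization of the $n_\mathcal{G}$ compatible path matrices as single column swaps $v \leftrightarrow p+j$ with $L_j\in\text{exog}(v)$, observe that each swap induces a rank-one update of $(\Lambda,\Gamma)$ (the paper computes $\Lambda'=\Lambda+(\Gamma_{:,j}-e_v)(I-\Lambda)_{v,:}$ directly rather than via Sherman--Morrison, but your substitution of $v$'s structural equation into the equations of the children of $L_j$ produces exactly the same terms), and then read off which entries can gain support. Your identification of the support --- rows indexed by $\text{ch}(L_j)$, columns by $\{v\}\cup\pa_{\text{o}}(v)$, plus the update to $\Gamma$ involving $\pa_{\text{l}}(v)\setminus\{L_j\}$ --- matches the paper's entrywise formulas, and your observation that swaps at distinct $v$ act on disjoint columns and hence compose independently (giving the product) is a point the paper leaves implicit.

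Two gaps remain, however. First, your argument for uniqueness of the minimal graph does not work as stated: the claim that ``the parent sets of every observed node can be read off from $P^X$'' is precisely what fails in this model class --- the alternative compatible parameters generally induce different (denser) parent sets, as the paper's Figure 2 example shows. The correct argument, which the paper gives, is that the explicit entrywise formulas $\lambda'_{ij}=\lambda_{ij}+(\gamma_{ij}-\delta_{iv})(\delta_{vj}-\lambda_{vj})$ together with genericity show that every alternative $(\Lambda',\Gamma')$ has support containing that of $(\Lambda,\Gamma)$, so no alternative can be strictly sparser. Second, you explicitly defer the decisive step --- translating the four families of introduced terms into the two-sided inclusion $\text{sib}(v)\subseteq\text{ch}(L_j)\subseteq\text{ch}(v)\cup\{v\}$ and verifying sufficiency --- calling it ``the main obstacle.'' That translation is the actual content of the lemma: one must check that the $X_v$-term forces $\text{ch}(L_j)\subseteq\text{ch}(v)\cup\{v\}$, that the terms coming from $\pa(v)\setminus\{L_j\}$ force the sibling condition, and conversely that when both inclusions hold every introduced coefficient lands on an existing edge. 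Without carrying this out, the stated set $S_v$ is not established.
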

This result is proven in the appendix by checking which additional edges are introduced by swapping columns in the path matrix.
As a special case, the lemma answers the question of when the edge weights are uniquely identifiable, a question also explored in \cite{Adams2021}. They arrive at necessary conditions for unique identifiability of the edge weights, which are more restrictive than ours. Still, our findings do not conflict with each other since \cite{Adams2021} allow latents to have arbitrarily many parents and children.

Prior to presenting the theoretical foundations of our method, we establish some terminology on tensors and cumulants, which play a crucial role in our estimation method.
\subsection{Tensors and Cumulants}
By $(\mathbb{R}^m)^{\otimes k}$, we denote the $k$-fold tensor product of $\mathbb{R}^m$, and by
$$\text{Sym}_k(\mathbb{R}^m) = \{T \in \left(\mathbb{R}^m\right)^{\otimes k}: t_{i_1\ldots i_k} = t_{i_{\pi(1)}\ldots i_{\pi(k)}} \text{ for all
permutations }\pi: [k] \to [k]\},$$ the subspace of symmetric tensors. In our discussion, all considered tensors are cumulant tensors. For a $m$-variate random vector $Z$ with joint distribution $P^Z$, the
$k$th-order cumulant tensor of $P^Z$  is the tensor  $\cum^{(k)}(P^{Z}) \in
\text{Sym}_k(\mathbb{R}^m)$ 
given by
\begin{equation*}
\Big(\cum^{(k)}\big(P^{Z}\big)\Big)_{i_1 \ldots i_k} = \sum_{(I_1, \ldots, I_h)} (-1)^{h-1} (h-1)! E \left( \prod_{j \in I_1} Z_j \right) \cdots E \left( \prod_{j \in I_h} Z_j \right),
\end{equation*}
where $(I_1, \ldots, I_h)$ is an arbitrary partition of $(i_1, \ldots,
i_k)$. If $Z$ is centered, the second- and third-order cumulant tensors are the same as the second- and third-order moment tensors, respectively. For higher orders, the moment and cumulant tensors generally differ. Importantly, a cumulant tensor of a random vector with independent components is diagonal. Furthermore, taking cumulants commutes with summation if the summands are stochastically independent. Those two properties are attractive when working with linear structural equation models. Specifically, denoting the $k$th-order cumulant of $P^X$ by $C^{(k)}$ and the $k$th-order cumulant of $P^\eta$ by $\Omega^{(k)}$, the following parametrization shown in \cite{Comon_2010} holds. 
\begin{lemma}\label{lem:parametrization} 
If $P^X$ satisfies a linear structural equation model with $\ell$ latent confounders, then 
\begin{equation*}
c^{(k)}_{i_1 \ldots i_k} = \sum_{j = 1}^{p+\ell} \cumeta{k}{j} b_{i_1 j} \cdots b_{i_k j}, \quad (i_1, \ldots, i_k \in [p]).
\end{equation*}
\end{lemma}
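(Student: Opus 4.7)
The plan is to derive the formula directly from the representation $X = B\eta$ together with two classical properties of joint cumulants: their multilinearity in each argument, and the fact that a joint cumulant of several random variables vanishes as soon as the collection can be split into two independent subgroups. Both properties can be taken from a standard reference on cumulants (for instance the exposition cited alongside the statement).

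First, I would write each observed coordinate as $X_{i_r} = \sum_{j_r=1}^{p+\ell} b_{i_r j_r} \eta_{j_r}$ for $r=1,\dots,k$, and apply multilinearity of the joint cumulant $\cum(\,\cdot\,,\dots,\,\cdot\,)$ slot by slot to obtain
\begin{equation*}
c^{(k)}_{i_1\dots i_k}
= \cum\!\bigl(X_{i_1},\dots,X_{i_k}\bigr)
= \sum_{j_1,\dots,j_k=1}^{p+\ell} b_{i_1 j_1}\cdots b_{i_k j_k}\,\cum\!\bigl(\eta_{j_1},\dots,\eta_{j_k}\bigr).
\end{equation*}
Next I would use the independence of the components of $\eta=(\epsilon,L)$: if the multi-index $(j_1,\dots,j_k)$ contains two distinct values $a\ne b$, then $\{\eta_{j_r}:j_r=a\}$ is independent of $\{\eta_{j_r}:j_r\ne a\}$, so the joint cumulant splits and equals zero. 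Hence only diagonal indices $j_1=\dots=j_k=j$ contribute, and on the diagonal $\cum(\eta_j,\dots,\eta_j)=\cumeta{k}{j}$ by definition. Collapsing the sum yields exactly
\begin{equation*}
c^{(k)}_{i_1\dots i_k} = \sum_{j=1}^{p+\ell} \cumeta{k}{j}\, b_{i_1 j}\cdots b_{i_k j},
\end{equation*}
as claimed.

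There is no real obstacle here beyond correctly invoking the two cumulant properties; the only thing to be careful about is to justify that all cumulants involved exist, which follows from the assumed finiteness of moments of $\epsilon$ and $L$ up to order $k$ together with the linearity of the map $\eta\mapsto X$. If one prefers to avoid quoting the multilinearity and vanishing-under-independence properties as black boxes, both can be proved in a few lines from the definition via the logarithm of the joint characteristic function, using $\log\varphi_{X}(t)=\log\varphi_{\eta}(B^{\top}t)$ and reading off the coefficient of $t_{i_1}\cdots t_{i_k}$ after expanding, exploiting that $\log\varphi_{\eta}(s)=\sum_{j}\log\varphi_{\eta_j}(s_j)$ because the $\eta_j$ are independent.
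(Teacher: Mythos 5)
Your proof is correct. The paper itself does not prove this lemma but simply cites \cite{Comon_2010} for it; your argument via multilinearity of joint cumulants and their vanishing on index tuples that split into independent groups is exactly the standard derivation that the cited reference carries out, and your remarks on existence of the cumulants (finite moments up to order $k$) and on the characteristic-function route are both sound.
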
 In the following, when referring to single entries $c^{(k)}_{i_1 \ldots i_k}$ or $\cumeta{k}{j}$ for low orders $k\leq 4$, we might omit the superscript since the order is clear from the number of indices.
\section{Two Observed Variables}
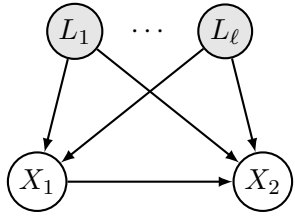
\begin{figure}
  \centering
  \begin{tikzpicture}[thick]
\node[vertex] (1) at (-1.5,0) {${X_1}$};
\node[vertex] (2) at (1.5,0) {${X_2}$};
\node[vertex, fill=gray!20] (3) at (-1,2) {${L_1}$};
\node (5) at (0,2) {$\cdots$};
\node[vertex, fill=gray!20] (4) at (1,2) {${L_\ell}$};
\draw[edge] (1) -- (2);
\draw[edge] (3) -- (1);
\draw[edge] (3) -- (2); 
\draw[edge] (4) -- (1); 
\draw[edge] (4) -- (2); 
  \end{tikzpicture}\\
  \caption{Graph $\mathcal{G}_{2,\ell}$.}\label{fig:2_nodes}
\end{figure}

We first focus on the case of two observed variables, which provides the foundation for our algorithm for an arbitrary number of variables. More specifically, we analyze the linear structural equation model $\mathcal{M}_{2, \ell}$ for the graph $\mathcal{G}_{2,\ell}$ depicted in Figure \ref{fig:2_nodes}. This encompasses all possible graphs for $p=2$ since we allow the edge weight $\lambda_{21}$ to be zero. In contrast, all $\gamma_{ij} \neq 0$ because we assume that each latent has at least two children. 
For the graph $\mathcal{G}_{2,\ell}$, there are $n_{\mathcal{G}_{2,\ell}} = \ell+1$ compatible path matrices. If $X = \Lambda X + \Gamma L + \epsilon$ is one feasible choice, then the other $\ell$ options arise by swapping $\epsilon_1$ and $L_j$ for some $j \in [\ell]$, namely,
\begin{align*}
  \epsilon_1' &= L_j,\\
  L_j' &= \epsilon_1,\\
\lambda_{21}' &= \lambda_{21} + \gamma_{2j}, \\
\gamma_{2j}' &= -\gamma_{2j} ,\\
 \gamma_{2i}' &= \gamma_{2i}-\gamma_{2j} \quad (i \neq j).
\end{align*}
If $\lambda_{21}\neq 0$, then $n_{\mathcal{G}_{2,\ell}, \text{sparse}}$ coincides with $n_{\mathcal{G}_{2,\ell}}$. Otherwise, $n_{\mathcal{G}_{{2,\ell}}, \text{sparse}} = 1$.
We first address how to infer the number of latent variables and the causal order.

\subsection{Distinguish Cause and Effect} A matrix formed from cumulants gives us a condition to recover the number of latents $\ell$, as well as the source.  
\begin{example}\label{ex:contraints_topological_order_p=2}
    For example, if there are no latents, consider the two matrices $$A^{(2,3)}_{1 \to 2} = \begin{pmatrix}
c_{11} & c_{12} \\      
c_{111} & c_{112}\\
c_{112} & c_{122}
\end{pmatrix}, \quad A^{(2,3)}_{2 \to 1} = \begin{pmatrix}
c_{22} & c_{12} \\      
c_{222} & c_{122}\\
c_{122} & c_{112}
\end{pmatrix}.$$ 
If $1$ is the source, then the left matrix, but not the right matrix, has rank $1$, and vice-versa if $2$ is the source. Similarly, for $\ell=1$, we define $$A^{(3,4)}_{1 \to 2} = \begin{pmatrix}
c_{111} & c_{112} & c_{122} \\
c_{1111} & c_{1112} & c_{1122} \\
c_{1112} & c_{1122} & c_{1222}\\\end{pmatrix}, \quad A^{(3,4)}_{2 \to 1} = \begin{pmatrix}
c_{222} & c_{122} & c_{112} \\
c_{2222} & c_{1222} & c_{1122} \\
c_{1222} & c_{1122} & c_{1112}\\\end{pmatrix}.$$ Then, $A^{(3,4)}_{1 \to 2}$ has rank $2$ if $1$ is the source, and $A^{(3,4)}_{2 \to 1}$ has rank 1 if $2$ is the source.
\end{example}

To extend to an arbitrary number of latents, we introduce the notion of a symmetric flattening. For $h \leq k$,  the $h$th flattening $\text{fl}_h(T)$ of the  symmetric tensor $T \in \text{Sym}_k(\mathbb{R}^m)$ is the $\binom{m+h-1}{k-h} \times \binom{m+h-1}{h}$ matrix with rows indexed by $(i_{h+1}, \ldots, i_k) \in [m]^{k-h}$ with $i_{h+1} \leq \cdots \le i_k$, columns indexed by $(i_1, \ldots, i_h) \in [m]^{h}$ with $i_1 \leq \cdots \leq i_h$, and entries given by
$$\left(\text{fl}_h(T)\right)_{(i_{h+1},\ldots,i_k),(i_1,\ldots, i_h)} = t_{i_1\ldots i_k}.$$ 
For $k_1 < k_2$, the matrix $A^{(k_1, \ldots, k_2)}_{1 \to 2}\in \mathbb{R}^{1 + \dots + (k_2-k_1+1)\times k_1}$ is constructed by stacking the $k_1$th symmetric flattenings of $C^{(k_1)}, \dots, C^{(k_2)}$ vertically and then removing the last column, namely,
\begin{align*}
A^{(k_1, \ldots, k_2)}_{1 \to 2} 
= \left(\begin{array}{c}
\quad \text{fl}_{k_1}\left(C^{(k_1)}\right)\quad \\[3pt] \hdashline
\vdots \vspace{1pt}
\\ \hdashline \\ \quad \text{fl}_{k_1}\left(C^{(k_2)}\right) \quad 
\end{array}\right)_{:,-1} 
= \left(\begin{array}{cccc}
c^{(k_1)}_{11\dots 11} & c^{(k_1)}_{11\dots 12} & \cdots & c^{(k_1)}_{12\dots 22} \vspace{1pt} \\[3pt] \hdashline \\[-8pt]
c^{(k_1+1)}_{111\dots 11} & c^{(k_1+1)}_{111\dots 12} & \cdots & c^{(k_1+1)}_{112\dots 22}  \\[3pt]
c^{(k_1+1)}_{211\dots 11} & c^{(k_1+1)}_{211\dots 12} & \cdots & c^{(k_1+1)}_{212\dots 22}   \\[3pt]
\hdashline \vspace{3pt}
\vdots & \vdots & \ddots & \vdots \vspace{1pt}
\\ \hdashline \\[-8pt]
c^{(k_2)}_{1\dots111\dots 11} & c^{(k_2)}_{1\dots111\dots 12} & \cdots & c^{(k_2)}_{1\dots112\dots 22} \\
\vdots & \vdots & \ddots & \vdots \\
c^{(k_2)}_{2\dots211\dots 11} & c^{(k_2)}_{2\dots211\dots 12} & \cdots & c^{(k_2)}_{2\dots212\dots 22}
\end{array}\right).
\end{align*}
Writing  $m$ for the minimum of the number of rows and columns of $A^{(k_1, \ldots, k_2)}_{1 \to 2}$, we obtain the following result, which is proven in the appendix.
\begin{theorem}\label{thm:rank_condition} If $P^X \in \mathcal{M}_{2, \ell}$ and $1$ is a source, then
  \begin{enumerate}[label=\alph*), topsep=0pt, itemsep=0pt]
    \item $A^{(k_1, \ldots, k_2)}_{1 \to 2}$ has rank at most $\ell+1$, and, generically exactly rank $\min(\ell+1, m)$.
    \item If $\lambda_{21} \neq 0$, $A^{(k_1, \ldots, k_2)}_{2 \to 1}$ has rank at most $\ell+2$,  and generically exactly rank $\min(\ell+2, m)$.
    \item If $\lambda_{21} = 0$, $A^{(k_1, \ldots, k_2)}_{2 \to 1}$ has rank at most $\ell+1$,  and generically exactly rank $\min(\ell+1, m)$.
  \end{enumerate}
\end{theorem}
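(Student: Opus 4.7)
The approach is to apply Lemma \ref{lem:parametrization} entry-wise and then exploit the structural zero $b_{12}=0$, which holds because $X_1$ is a source (so $\epsilon_2$ has no directed path to $X_1$). Under the convention $\gamma_{1j}=1$ (since $X_1$ is an oldest child of every latent), set $\alpha_j := b_{1j}$ and $\beta_j := b_{2j}$, giving
\begin{align*}
\alpha = (1,\,0,\,1,\,\ldots,\,1), \qquad \beta = (\lambda_{21},\,1,\,\lambda_{21}+\gamma_{21},\,\ldots,\,\lambda_{21}+\gamma_{2\ell}).
\end{align*}
Then every cumulant entry of the form $c^{(k)}_{1^a 2^b}$ with $a+b=k$ equals $\sum_{j=1}^{\ell+2}\cumeta{k}{j}\,\alpha_j^a\,\beta_j^b$, and the whole proof becomes a calculation with this explicit parametrization.

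For part (a), the entry of $A^{(k_1,\ldots,k_2)}_{1\to 2}$ at row $(k,v)$ and column $u\in\{0,\ldots,k_1-1\}$ equals $c^{(k)}_{1^{k-u-v}\,2^{u+v}}$. Since the all-$2$ column $u=k_1$ was removed, $k-u-v\geq 1$ throughout, which kills the $j=2$ summand because $\alpha_2=0$. Setting $J:=\{1,3,\ldots,\ell+2\}$, I would write the matrix as a product $MN$ with $N\in\mathbb{R}^{(\ell+1)\times k_1}$ Vandermonde-type via $N_{ju}=\beta_j^u$ and $M_{(k,v),j}=\cumeta{k}{j}\,\beta_j^v$, giving the upper bound $|J|=\ell+1$ immediately. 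For the generic lower bound I would use Zariski openness of the condition ``rank $\geq r$'' and exhibit a single parameter choice attaining rank $\min(\ell+1,m)$: pick $\lambda_{21}$ and the $\gamma_{2\cdot}$ so that the $\beta_j$ for $j\in J$ are pairwise distinct (yielding $\mathrm{rank}\,N=\min(\ell+1,k_1)$), and exploit the fact that the rows $(k,0)$ of $M$ are simply $\cumeta{k}{j}$ to obtain the matching column rank from the free cumulants.

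For parts (b) and (c), the entries of $A^{(k_1,\ldots,k_2)}_{2\to 1}$ are $c^{(k)}_{1^{u+v}\,2^{k-u-v}}$, so the exponent of $\alpha_j$ is now $u+v$ and vanishes exactly at $u=v=0$. Consequently the $j=2$ summand contributes $\cumeta{k}{2}$ at the single entry $(k,0)$ of column $u=0$ in each block---a rank-one perturbation supported in that column. For (b) with $\lambda_{21}\neq 0$, $\beta_1$ is generically nonzero and the $\ell+1$ sources $j\in\{1,3,\ldots,\ell+2\}$ assemble into a rank-at-most-$(\ell+1)$ factorization of the same form as in (a); adding the rank-one perturbation bounds the total rank by $\ell+2$. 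For (c) with $\lambda_{21}=0$, $\beta_1=0$ and since $k-u-v\geq 1$ always, the $j=1$ summand drops out, leaving only $j\in\{3,\ldots,\ell+2\}$ (rank at most $\ell$) together with the rank-one $j=2$ correction, totalling $\ell+1$. The generic rank claim follows as in (a), with the extra observation for (b) that the perturbation vector (carrying entries $\cumeta{k}{2}$) generically lies outside the column span of the $(\ell+1)$-dimensional piece.

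The main obstacle is the generic lower bound when $m=\min(k_1,\binom{k_2-k_1+2}{2})$ is smaller than $\ell+1$ (or $\ell+2$ in (b)): the rank is then capped by the row count of $M$, but the rows with $v>0$ are not freely parametrized, obeying the constrained form $\cumeta{k}{j}\,\beta_j^v$. I would handle this by first analysing the top block $k=k_2$ in isolation, whose $(k_2-k_1+1)\times(\ell+1)$ submatrix factors as $\mathrm{diag}(\cumeta{k_2}{j})\cdot V$ with $V$ a Vandermonde in the distinct $\beta_j$, delivering rank $\min(k_2-k_1+1,\ell+1)$; then adjoining rows from lower-order blocks and invoking genericity of the $\cumeta{k}{j}$ (via Laplace expansion of an $m\times m$ minor as a nonzero polynomial in the cumulants) to expand the row span to the required dimension $m$.
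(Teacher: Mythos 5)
Your setup and upper bounds follow the paper's own proof: the same use of Lemma \ref{lem:parametrization}, the same exploitation of the structural zero $b_{12}=0$ to kill the $\epsilon_2$ term in every entry of $A^{(k_1,\ldots,k_2)}_{1\to 2}$, and the same factorization into a cumulant-weighted matrix $M$ with $\ell+1$ columns times a Vandermonde factor $N$; your reading of $A^{(k_1,\ldots,k_2)}_{2\to 1}$ as such a piece plus a rank-one correction supported in the all-$2$ column is likewise the intended argument for (b) and (c). Where you genuinely diverge is the generic lower bound. The paper establishes that $M$ attains rank $\min(\ell+1,m)$ by induction over the blocks, invoking a separate measure-theoretic lemma (Lemma \ref{lem:lemma_for_thm_rank_cond}, proved via Sard's theorem) to show that each new block $w_j\{1,b,\ldots,b^j\}$ meets the previously accumulated span trivially for generic $w_j$. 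You instead propose to show that a suitable $r\times r$ minor is a nonzero polynomial in the coefficients and cumulants and conclude by Zariski openness; this is more elementary and, if completed, arguably cleaner. Two caveats. First, your initial suggestion for (a) --- obtaining full column rank from the rows $(k,0)$ alone, whose entries are the free cumulants $\cumeta{k}{j}$ --- only works when $k_2-k_1+1\ge\ell+1$; already for $\ell=3$ the minimal order choice gives $k_2-k_1+1=3<\ell+1$, so the machinery of your last paragraph (adjoining the constrained rows $\cumeta{k}{j}\beta_j^v$, $v>0$) is needed whenever the target rank exceeds the number of blocks, not only when $m<\ell+1$ as you frame it. Second, that machinery is still only a plan: the substantive step is to exhibit one monomial in the $\cumeta{k}{j}$ whose coefficient in the Laplace expansion of the chosen minor is a product of generalized Vandermonde minors in the distinct $\beta_j$ (hence a nonzero polynomial), and to run the analogous argument for the rank-one-perturbed matrix in (b), where you assert but do not show that the perturbation leaves the $(\ell+1)$-dimensional column span. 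These steps are routine but carry the real content of the genericity claim --- exactly the part the paper devotes its auxiliary lemma to --- so they must be written out before the proof is complete.
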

While the theorem holds true for arbitrary choices of orders $k_1, k_2$, in practice, we do not want to use higher orders than necessary. To obtain a non-trivial rank bound, the smallest possible choice of $k_1$ is $\ell+2$ otherwise there would be too few columns. For the number of rows to be large enough, we need
\begin{align*}
\ell+1 &< 1 + 2 + \dots + (k_2-k_1+1).
\end{align*} Since $\ell+1$ is an integer, this is equivalent to
\begin{align*}  
\ell+2 &\leq 1 + 2 + \dots + (k_2-k_1+1) = \frac{(k_2-k_1+1)(k_2-k_1+2)}{2}. 
\end{align*}
So,  $x=k_2-k_1$ fulfills
\begin{align*}
0 \leq \frac{1}{2}(x^2+3x+2-2 \ell-4) = \frac{1}{2}(x^2+3x-2(\ell+1)),
\end{align*}
yielding that $k_2-k_1$ needs to be greater than or equal to the maximal root of the quadratic polynomial on the right-hand side, which is $\frac{1}{2}(-3+\sqrt{8 \ell+17})$.
%
For example for $\ell=0,1$, this choice of orders results in $(k_1, k_2) = (2,3)$ and $(k_1, k_2) = (3,4)$, respectively, as in Example \ref{ex:contraints_topological_order_p=2}.  Henceforth, we denote $A^{(\ell)}_{1 \to 2} = A^{(k_1, k_2)}_{1 \to 2}$ where $(k_1, k_2)$ is the smallest possible choice yielding a non-trivial constraint.

The number of latents is unknown a priori, but we can derive it by sequentially testing the rank condition for increasing $\ell$.

\subsection{Estimating Effect Sizes}\label{sec:equations_for_lambda}
As soon as the source and the number of latents are inferred, the next natural step is to estimate the causal effects. Recall that $b_{21}$ can only be identified up to permuting it with $b_{23}, \dots, b_{2,\ell+1}$, which encompass all remaining causal effects in the model. Thus, by finding all possible choices for $b_{21}$, we instantaneously find all entries of $B$.

Our estimation procedure builds on the rank condition from Theorem \ref{thm:rank_condition}. Specifically, we extend the matrix $A^{(\ell)}_{1 \to 2}$ by adding
\begin{equation*}
\begin{pmatrix}
1 & b_{21} & \dots & b_{21}^{\ell+2}
\end{pmatrix}
\end{equation*} 
as an additional row on top and denote the result by $\tilde{A}_{1 \to 2}$. This extension does not increase the rank of this matrix; therefore, its minors provide us with polynomial equations for $b_{21}$. 
\begin{theorem}\label{thm:effect_sizes} Consider the determinant of an $\ell+2 \times \ell+2$ minor of $\tilde{A}^{(\ell)}_{1 \to 2}$ that contains the first row and treat it as a polynomial in $b_{21}$. The roots of this polynomial give the $\ell+1$ possible values for $b_{21}$.
\end{theorem}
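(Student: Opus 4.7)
My plan is to prove (a) that each of the $\ell+1$ compatible values of $b_{21}$ is a root of the minor's determinant, and (b) that the determinant, viewed as a polynomial in $b_{21}$, has degree exactly $\ell+1$. A polynomial of degree $\ell+1$ has at most $\ell+1$ roots, so (a) and (b) together pin down the roots exactly.

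For (a), the key is an explicit description of the row space of $A^{(\ell)}_{1 \to 2}$. Because $X_1$ is a source, the second column of the path matrix $B$ equals $e_2$, so the exogenous source $\epsilon_2$ contributes nothing to $A^{(\ell)}_{1\to 2}$ (every column index of $A^{(\ell)}_{1\to 2}$ contains at least one index equal to $1$). Applying Lemma~\ref{lem:parametrization} and factoring out appropriate powers of $b_{1j}$ and $b_{2j}$, each column of $A^{(\ell)}_{1\to 2}$ decomposes into $\ell+1$ rank-one contributions, and the row space is spanned by Vandermonde-type vectors with nodes $\alpha_j = b_{2j}/b_{1j}$, one for each remaining source $\eta_j\in\{\epsilon_1,L_1,\dots,L_\ell\}$. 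For $\eta_j=\epsilon_1$ this ratio equals $\lambda_{21}$; for $\eta_j=L_m$, using the normalization $\gamma_{1m}=1$ imposed at the start of the section, it equals $\lambda_{21}+\gamma_{2m}$. These are exactly the $\ell+1$ compatible values of $b_{21}$ obtained by swapping $\epsilon_1$ with $L_m$ as described at the beginning of this section. Hence substituting any compatible value of $b_{21}$ into the parametric top row yields a vector in the row span of $A^{(\ell)}_{1\to 2}$, so $\tilde A^{(\ell)}_{1\to 2}$ retains rank $\ell+1$ and every $(\ell+2)\times(\ell+2)$ minor containing the first row vanishes.

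For (b), I would expand the determinant along the top row to obtain a polynomial of degree at most $\ell+1$ in $b_{21}$ whose coefficients are signed $(\ell+1)\times(\ell+1)$ minors of $A^{(\ell)}_{1 \to 2}$. Theorem~\ref{thm:rank_condition}(a) guarantees that $A^{(\ell)}_{1\to 2}$ has generic rank $\ell+1$, so some $(\ell+1)\times(\ell+1)$ minor is nonzero; under the genericity of coefficients and cumulants the leading minor appearing in the expansion is nonzero, giving degree exactly $\ell+1$. Since the $\ell+1$ compatible values $\lambda_{21}$ and $\lambda_{21}+\gamma_{2m}$ are pairwise distinct for generic $\gamma_{2m}$, they exhaust the root set.

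The main difficulty is step (a): making the row-space description tight enough to identify each Vandermonde direction with a compatible value of $b_{21}$, which requires careful bookkeeping of the normalization $\gamma_{1m}=1$ and of which sources actually contribute to $A^{(\ell)}_{1\to 2}$. Step (b) then reduces to a non-vanishing argument for the leading minor, which follows from Theorem~\ref{thm:rank_condition} together with genericity.
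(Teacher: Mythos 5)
Your proposal is correct and follows essentially the same route as the paper's proof: both rest on the factorization $A^{(\ell)}_{1\to 2}=MN$ with Vandermonde factor $N$ from the proof of Theorem~\ref{thm:rank_condition}, so that appending the parametric row keeps the rank at $\ell+1$ and forces the $(\ell+2)\times(\ell+2)$ minors to vanish at the compatible values, with genericity ruling out the identically-zero polynomial. You merely phrase the argument in terms of the row space rather than the column space, and you make explicit two points the paper leaves implicit --- that every one of the $\ell+1$ Vandermonde nodes $b_{21},b_{23},\dots,b_{2,2+\ell}$ (not just the true $b_{21}$) is a root, and that the degree is exactly $\ell+1$ so no spurious roots arise.
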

\begin{proof} In the proof of Theorem \ref{thm:rank_condition} we show that the columns of $A^{(\ell)}_{1 \to 2}$ lie in the span of the columns of the matrix $M$ defined in \eqref{eq:matrix_M}.
More precisely, denoting by  $m_1, \dots, m_{\ell+1}$ the columns of $M$,  the $i$th column of $A^{(\ell)}_{1 \to 2}$ is 
\begin{equation*}
b_{21}^{i-1}m_1 + b_{23}^{i-1}m_2 + \cdots + b_{2,2+\ell}^{i-1}m_{\ell+1}.
\end{equation*}
Matching to this, $\left(\tilde{A}^{(\ell)}_{1 \to 2}\right)_{1, i} = b_{21}^{i-1} \cdot 1$ such that the columns of $\tilde{A}^{(\ell)}_{1 \to 2}$ are contained in 
\begin{equation*}
\text{span}\left( \left\{
\begin{pmatrix}
1\\
m_1
\end{pmatrix},
\begin{pmatrix}
0\\
m_2
\end{pmatrix}, 
\dots,
\begin{pmatrix}
0\\
m_{\ell+1}
\end{pmatrix}
\right\}\right).  
\end{equation*}
Consequently, the rank of $\tilde{A}^{(\ell)}_{1 \to 2}$ is at most $\ell+1$ and each minor of size $\ell+2$ vanishes. Similarly to the proof of Theorem \ref{thm:rank_condition}, we can show that generically, the minor is not the zero polynomial, which concludes the proof.
\end{proof}
For example, for $\ell=1$, $\tilde{A}^{(\ell)}_{1 \to 2}$ has size $4 \times 3$ and rank 2. The minors of size $3 \times 3$ provide the following three equations for $b_{21}$:
\begin{align*}
b_{21} ^2 \left(c_{1112} c_{112}-c_{1122} c_{111}\right)+b_{21}  \left(c_{1222} c_{111}-c_{1112} c_{122}\right)-c_{1222} c_{112}+c_{1122} c_{122} &=0,\\
b_{21} ^2 \left(c_{1111} c_{112}-c_{1112} c_{111}\right)+b_{21}  \left(c_{1122} c_{111}-c_{1111} c_{122}\right)-c_{1122} c_{112}+c_{1112} c_{122} &=0,\\
b_{21} ^2 \left(c_{1111} c_{1122}-c_{1112}^2\right)+b_{21}  \left(c_{1112} c_{1122}-c_{1111} c_{1222}\right)-c_{1122}^2+c_{1112} c_{1222} &= 0,
\end{align*}
These three equations are equivalent in the sense that their solutions coincide.
\subsection{Cumulants of the Latents}
Knowing the edge weights, we can determine certain cumulants of the latents and of $\epsilon_1$. 
\begin{lemma}\label{lem:lin_eq_omega} Under the model $\mathcal{M}_{2, \ell}$, 
\begin{equation}
\begin{pmatrix}
1 & 1 & \dots & 1 \\
b_{21} & b_{23} & \dots & b_{2,2+\ell} \\
\vdots & \vdots & \ddots & \vdots \\
b_{21}^{(k-1)} & b_{23}^{(k-1)} & \dots & b_{2,2+\ell}^{(k-1)}
\end{pmatrix}
\begin{pmatrix}
\omega_{1\dots 1} \\
\omega_{p+1,\dots,p+1}\\
\vdots \\
\omega_{p+\ell,\dots p+\ell}
\end{pmatrix}
=
\renewcommand{\arraystretch}{1.5} 
\begin{pmatrix}
c^{(k)}_{11 \dots 11} \\
c^{(k)}_{11\dots 12} \\
\vdots \\
c^{(k)}_{12\dots 22}
\end{pmatrix}.
\renewcommand{\arraystretch}{1} 
\end{equation}
This equation system is generically uniquely solvable if $k \geq \ell + 1$.
\end{lemma}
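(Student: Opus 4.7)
The plan is to obtain the linear system directly from the cumulant parametrization of Lemma \ref{lem:parametrization}, and then to recognize the coefficient matrix as a Vandermonde matrix whose nodes are generically distinct.

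First, I would write out the relevant entries of the path matrix $B\in\mathbb{R}^{2\times(2+\ell)}$ for the model $\mathcal{M}_{2,\ell}$. Since $X_1$ is a source, $b_{11}=1$ and $b_{12}=0$. The normalization $\gamma_{1j}=1$ for the oldest child $X_1$ of each latent $L_j$ gives $b_{1,2+j}=1$ for $j=1,\ldots,\ell$. In the second row, $b_{22}=1$, while $b_{21}=\lambda_{21}$ and $b_{2,2+j}=\lambda_{21}+\gamma_{2j}$ for $j=1,\ldots,\ell$.

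Next, I would substitute into Lemma \ref{lem:parametrization}. For an index tuple with $a$ ones and $k-a$ twos with $a\ge 1$, the summand corresponding to $\eta_2=\epsilon_2$ vanishes because $b_{12}=0$, leaving
\[
c^{(k)}_{\underbrace{1\cdots 1}_{a}\underbrace{2\cdots 2}_{k-a}}
= \omega^{(k)}_{1\cdots 1}\, b_{21}^{\,k-a} + \sum_{j=1}^{\ell} \omega^{(k)}_{(p+j)\cdots(p+j)}\, b_{2,2+j}^{\,k-a}.
\]
Stacking these equations for $a=k,k-1,\dots,1$ (so that row index $i$ corresponds to $k-a=i-1$) yields exactly the matrix equation stated in the lemma.

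Finally, for unique solvability I would observe that the coefficient matrix is the $k\times(\ell+1)$ Vandermonde matrix on the nodes $b_{21},b_{23},\ldots,b_{2,2+\ell}$. Its columns are linearly independent, and the system therefore admits a unique solution (which necessarily coincides with the true vector of cumulants) as soon as $k\ge \ell+1$ and the nodes are pairwise distinct. The distinctness is where genericity enters: $b_{21}\neq b_{2,2+j}$ reduces to $\gamma_{2j}\neq 0$, which holds because each latent has at least two children, and $b_{2,2+i}\neq b_{2,2+j}$ for $i\neq j$ reduces to $\gamma_{2i}\neq\gamma_{2j}$, a condition that excludes a Lebesgue-null set of parameters. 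The only subtle point is the reading of ``generically uniquely solvable'' as the combination of existence (which is automatic from the above identity at the true parameter values) and uniqueness (from the Vandermonde rank argument); no further obstacle arises.
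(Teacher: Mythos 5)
Your proposal is correct and follows exactly the route the paper intends: the paper dispenses with the proof by noting the lemma is a direct consequence of the cumulant parametrization (Lemma \ref{lem:parametrization}), and your write-up simply fills in the details — the vanishing of the $\epsilon_2$ term because $b_{12}=0$, the normalization $b_{1,2+j}=1$, and the generic full column rank of the resulting Vandermonde matrix once $k\ge\ell+1$ and the nodes $b_{21},b_{23},\dots,b_{2,2+\ell}$ are pairwise distinct. No gaps.
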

This result is a direct consequence of \eqref{lem:parametrization}.
\section{Arbitrary Number of Variables}
We aim to use the above results and an iterative procedure to estimate the causal order and all causal effects within an arbitrarily large graph. 
For now, we focus on finding only one valid choice for $B$, and we are indifferent to whether this choice corresponds to the sparsest possible graph. All other compatible options, particularly the sparsest ones, can be easily inferred from one choice, as laid out in Section \ref{sec:lin_sem}. 
The first step consists of determining a source $s$, the latents pointing to it, and all causal effects from the source and those latents on the remaining nodes.
\subsection{Inferring a Source and its Effects}
A crucial factor facilitating our strategy is that the marginal distribution of every pair of observed nodes $(v, w)$ again satisfies a linear structural equation model. Denote by $(Z_1, \dots, Z_{p+\ell}) = (X_1, \dots, X_p, L_1, \dots, L_{\ell})$  all observed and latent nodes and define the set of common confounders of $v$ and $w$ as
\begin{align*}
    \text{conf}(w,v) = \{Z_j \neq X_v, X_w: \ &\text{there exist two directed paths  $\pi_v, \pi_w$ from $Z_j$ to $v,w$,} \\&\text{respectively, not sharing any node apart from $Z_j$}\}.
\end{align*}
\begin{lemma}\label{lemma:marg_distr} Assume that $P^X$ follows a linear structural equation model consistent with a DAG $\mathcal{G}$ and that $X_v$ is a non-descendant of $X_w$. Then, the marginal distribution of $(X_v,X_w)$ lies in $\mathcal{M}_{2, |\text{conf}(v,w)|}$. If $v$ is a source, the parameters in the marginal model are given by
\begin{equation}\label{eq:parameters_marginal_model}
\begin{aligned}
   \ell' &= |\text{conf}(w,v)|, \\
  (L_1', \dots, L_\ell') &= (\eta_j: j \in\text{conf}(w,v)), \\
  \epsilon_1' &= \sum_{j \in \text{an}(v)\setminus\text{conf}(w,v)} \eta_j, \\
  \epsilon_2' &= \sum_{j \in \text{an}(w)\setminus\text{an}(v)}  \eta_j, \\
  b_{21}' &= b_{wv}, \text{ and} \\
  (b_{2,1+2}', \dots,b_{2,\ell+2}') &=(b_{wj}, j \in \text{conf}(w,v)).
\end{aligned}
\end{equation}
\end{lemma}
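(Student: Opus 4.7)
The plan is to expand $X_v$ and $X_w$ as linear combinations of the exogenous sources $\eta_j$ via the path matrix $B$, then regroup terms to exhibit the structure of a bivariate linear SEM with $|\text{conf}(w,v)|$ latents. Writing $X_v = \sum_j b_{vj} \eta_j$ and $X_w = \sum_j b_{wj} \eta_j$, I exploit that $v$ is a source: its only exogenous ancestors are $\epsilon_v$ and its latent parents, each with $b_{vj}=1$ under the scaling convention (which applies because $v$, being a topological source among the observed nodes, is an oldest child of every $L_k \in \text{pa}_\text{l}(v)$). Hence $X_v = \sum_{j \in \text{an}(v)} \eta_j$.

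The core structural fact to establish is that, for $v$ a source, every common ancestor $\eta_j \in \text{an}(v)\cap\text{an}(w)$ either lies in $\text{conf}(w,v)$ or has every path to $w$ passing through $v$. This dichotomy follows because the only path from $\eta_j$ into $v$ is the length-one direct edge $\eta_j \to v$, so a path to $w$ that avoids $v$ would be automatically node-disjoint from $\eta_j \to v$ and thus witness $\eta_j \in \text{conf}(w,v)$. In the non-confounder case, the total effect decomposes multiplicatively as $b_{wj} = b_{vj}\cdot b_{wv} = b_{wv}$, which lets me extract the $b_{wv}X_v$ contribution cleanly from $X_w$.

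With this in hand, I set $L_j' = \eta_j$ for $j \in \text{conf}(w,v)$, $\epsilon_1' = \sum_{j \in \text{an}(v)\setminus\text{conf}(w,v)} \eta_j$, $b_{21}' = b_{wv}$, $\gamma_{2j}' = b_{wj}-b_{wv}$ for $j \in \text{conf}(w,v)$, and take $\epsilon_2'$ to be the $b_{wj}$-weighted linear combination of $\eta_j$ over $j \in \text{an}(w)\setminus\text{an}(v)$ (the coefficients are implicit in the lemma's shorthand notation). Verifying the marginal SEM equations $X_v = \sum_j L_j' + \epsilon_1'$ and $X_w = b_{21}' X_v + \sum_j \gamma_{2j}' L_j' + \epsilon_2'$ then reduces to matching coefficients of each $\eta_j$, using the partition $\text{an}(w) = \text{conf}(w,v) \sqcup ((\text{an}(v)\cap\text{an}(w))\setminus\text{conf}(w,v)) \sqcup (\text{an}(w)\setminus\text{an}(v))$; the dichotomy above guarantees that the middle block contributes exactly $b_{wv}$ to both sides.

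Mutual independence of $L_1',\ldots,L_{\ell'}',\epsilon_1',\epsilon_2'$ is immediate since they are supported on pairwise disjoint index subsets of a mutually independent family $(\eta_j)$, and non-Gaussianity of $\epsilon_1',\epsilon_2'$ follows from Cram\'er's theorem: each contains at least one independent non-Gaussian summand ($\epsilon_v$ and $\epsilon_w$, respectively). The general statement in the first sentence of the lemma, for $v$ merely a non-descendant of $w$, follows by a parallel argument that drops the $b_{vj}=1$ simplification and must also accommodate observed $X_k$ appearing as common confounders; the count $|\text{conf}(v,w)|$ still yields the number of latents in the resulting bivariate SEM. The main obstacle is the structural dichotomy in the second paragraph — translating the graph-theoretic non-confounder condition into the algebraic identity $b_{wj}=b_{wv}$ through path analysis in the extended DAG with the $\eta_j$ adjoined.
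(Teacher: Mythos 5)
Your treatment of the source case is correct and follows essentially the same route as the paper: expand $X_v$ and $X_w$ in the exogenous sources via the path matrix, establish that for a source $v$ every common exogenous ancestor is either a confounder of the pair or has all of its paths to $w$ blocked by $v$, and use the factorization $b_{wj}=b_{wv}b_{vj}$ on the blocked part to peel off the contribution $b_{wv}X_v$. The paper states this dichotomy as $\text{an}(v,w)\setminus \text{bl}_v(w)=\text{conf}(v,w)$ for $v$ a source, where $\text{bl}_v(w)$ is the set of ancestors of $w$ all of whose paths to $w$ pass through $v$. Your explicit verification of mutual independence and of non-Gaussianity of $\epsilon_1',\epsilon_2'$ via Cram\'er's theorem is a welcome addition that the paper leaves implicit. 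One small caveat: $b_{vj}=1$ for a latent parent $L_j$ of the source $v$ holds only if the normalization happened to designate $v$ as the oldest child of $L_j$; $v$ is always \emph{an} oldest child, but the convention fixes a single one. This does not affect the argument materially (the paper's own proof carries the coefficients $b_{vj}$ along).

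The genuine gap is in the first sentence of the lemma, the general case where $v$ is merely a non-descendant of $w$. You assert this follows by a ``parallel argument,'' but the dichotomy at the heart of your source-case proof fails there: a common exogenous ancestor $\eta_j$ can be neither in $\text{conf}(v,w)$ nor blocked by $v$. For instance, if $\eta_j\to X_k$ with $X_k\to v$ and $X_k\to\cdots\to w$ avoiding $v$, then every pair of directed paths from $\eta_j$ to $v$ and to $w$ shares the node $X_k$, so $\eta_j\notin\text{conf}(v,w)$, yet not all paths from $\eta_j$ to $w$ pass through $v$. A naive regrouping along your partition would then yield more than $|\text{conf}(v,w)|$ independent latent terms. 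The paper resolves this by showing that each such $\eta_j$ admits a switch node $\text{sw}(Z_j)\in\text{conf}(v,w)$ through which all of its paths to $v$ or $w$ run, and absorbing it into that confounder's latent via $L_i'=\eta_i+\sum_{j:\,\text{sw}(Z_j)=Z_i}b_{ij}\eta_j$; the existence of such a switch node also requires an argument. This absorption step is the substantive content of the general statement and is missing from your proposal.
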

The proof and the parameters in the case that $v$ is no source can be found in the Appendix.

Using the lemma, for a pair of nodes $(v, w)$, we can identify which one is the ancestor by sequentially testing if $A^{(\ell)}_{v \to w}$ or $A^{(\ell)}_{w \to v}$ drops rank for $\ell=0,1,\dots$. If $A^{(\ell)}_{v \to w}$  drops rank for lower $\ell$, $v$ is the ancestor. 
In particular, a source can be found.
\begin{lemma} A node $s$ is a source if and only if for all other nodes $w \in [p]\setminus\{s\}$, $$\min\{\ell: \text{rank}(A^{(\ell)}_{s \to w}) = \ell+1\} \leq \min\{\ell: \text{rank}(A^{(\ell)}_{w \to s}) = \ell+1\}.$$
\end{lemma}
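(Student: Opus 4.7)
The plan is to marginalize to two variables via Lemma~\ref{lemma:marg_distr} and then invoke Theorem~\ref{thm:rank_condition} to pinpoint, for each ordered pair, the unique $\ell$ at which the rank of $A^{(\ell)}_{v \to w}$ equals $\ell+1$. Throughout, recall that with the minimal orders $k_1 = \ell+2$ and $k_2$ chosen so that the number of rows exceeds $\ell+1$, the matrix $A^{(\ell)}_{v \to w}$ has $\ell+2$ columns and strictly more rows, so the relevant $m := \min(\text{rows},\text{cols}) = \ell+2$.

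For the forward direction, I would suppose $s$ is a source of $\mathcal{G}$ and fix $w \neq s$. Since $s$ is then a non-descendant of $w$, Lemma~\ref{lemma:marg_distr} shows that $(X_s,X_w)$ lies in $\mathcal{M}_{2,\ell^\ast}$ with $\ell^\ast = |\text{conf}(s,w)|$ and with $s$ as the source of the bivariate model. Theorem~\ref{thm:rank_condition}(a) then gives $\text{rank}(A^{(\ell)}_{s \to w}) = \min(\ell^\ast+1,\ell+2)$ generically, which equals $\ell+1$ precisely when $\ell = \ell^\ast$: for $\ell<\ell^\ast$ the rank is $\ell+2$ and for $\ell>\ell^\ast$ it is $\ell^\ast + 1 < \ell+1$. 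Parts (b) and (c) of the same theorem yield $\text{rank}(A^{(\ell)}_{w \to s}) = \min(\ell^\ast+2,\ell+2)$ when $w$ is a descendant of $s$ (so $b_{ws} \neq 0$ by generic faithfulness) and $\min(\ell^\ast+1,\ell+2)$ otherwise; an identical case analysis shows the first $\ell$ achieving rank $\ell+1$ is $\ell^\ast + 1$ in the former case and $\ell^\ast$ in the latter. Either way, the left-hand side equals $\ell^\ast$ and the right-hand side is $\geq \ell^\ast$, so the inequality holds.

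For the contrapositive, I would assume $s$ is not a source, so it has a strict ancestor $v$ in $\mathcal{G}$. Taking $w=v$, Lemma~\ref{lemma:marg_distr} applies to the pair $(X_v,X_s)$ with $v$ as the source, and generic faithfulness forces $b_{sv} \neq 0$ since $s$ is a descendant of $v$. Repeating the two computations above with the roles of source and non-source swapped then yields $\min\{\ell : \text{rank}(A^{(\ell)}_{v \to s}) = \ell+1\} = |\text{conf}(v,s)|$ while $\min\{\ell : \text{rank}(A^{(\ell)}_{s \to v}) = \ell+1\} = |\text{conf}(v,s)| + 1$, so the inequality fails for this particular $w=v$. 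Hence if the inequality holds for every $w$, $s$ must be a source.

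The main technical point is verifying that the generic rank of $A^{(\ell)}_{v \to w}$ applied to each bivariate marginal is given \emph{exactly} by the formulas in Theorem~\ref{thm:rank_condition} with $m = \ell+2$; once this is in place the case analysis collapses to arithmetic. This relies on the minimal choice of $(k_1,k_2)$ ensuring columns $\leq$ rows, and on generic faithfulness translating ancestry in $\mathcal{G}$ into nonvanishing entries $b_{\cdot,\cdot}$ of the path matrix, so that case (b) of Theorem~\ref{thm:rank_condition} rather than (c) applies in the backward step.
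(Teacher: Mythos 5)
Your proof is correct and follows the same route the paper intends (the lemma is stated there without an explicit proof, as an immediate consequence of the two preceding results): marginalize each pair via Lemma~\ref{lemma:marg_distr}, apply Theorem~\ref{thm:rank_condition} to the bivariate model with $m=\ell+2$, and read off that the minimizing $\ell$ equals $|\text{conf}(s,w)|$ in the source direction and $|\text{conf}(s,w)|+1$ in the reversed direction whenever a directed path makes $\lambda_{21}\neq 0$ in the marginal model, which is exactly the arithmetic you carry out. The only point left implicit---in your write-up as in the paper---is that genericity of the original parameters must be transferred to the induced parameters of the bivariate marginal (whose noise cumulants are polynomial images of the original coefficients and cumulants), but this is routine.
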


\begin{figure}
\centering
\begin{subfigure}[b]{0.4\textwidth}
  \centering
\begin{tikzpicture}[thick]
            \node[vertex] (1) at (0,0) {$s$}; 
            \node[vertex, fill=gray!20] (L1) at (1,1.2) {$L_1$};
            \node[vertex] (2) at (1.6,-0.5) {$w_1$};
            \node[vertex] (3) at (-1.6,-0.5) {$w_2$};
            \node[vertex, fill=gray!20] (L2) at (-1,1.2) {$L_2$};
            \draw[edge] (1) -- (2);
            \draw[edge] (L1) -- (2);
            \draw[edge] (L1) -- (1);
            \draw[edge] (L2) -- (1);
            \draw[edge] (L2) -- (3);
            \draw[edge] (1) -- (3);
        \end{tikzpicture}
        \subcaption{Pairwise confounding}\label{fig:Two_latents}
\end{subfigure}\hspace*{1cm}
\begin{subfigure}[b]{0.4\textwidth}\centering
  \begin{tikzpicture}[thick]
            \node[vertex] (1) at (0,0) {$s$}; 
            \node[vertex, fill=gray!20] (L1) at (3.1,1) {$L_1$};
            \node[vertex] (2) at (2,0) {$w_1$};
            \node[vertex] (3) at (2.5,-1.5) {$w_2$};
            \draw[edge] (1) -- (2);
            \draw[edge] (2) -- (3);
            \draw[edge] (L1) -- (2);
            \draw[edge] (L1) -- (1);
            \draw[edge] (L1) -- (3);
            \draw[edge] (1) -- (3);
        \end{tikzpicture}
        \subcaption{One latent variable }\label{fig:One_overall_latent}
\end{subfigure}
        \caption{Two graphs with the same number of confounders between the source and its descendants.}\label{fig:Same_pairwise_structure}
\end{figure}
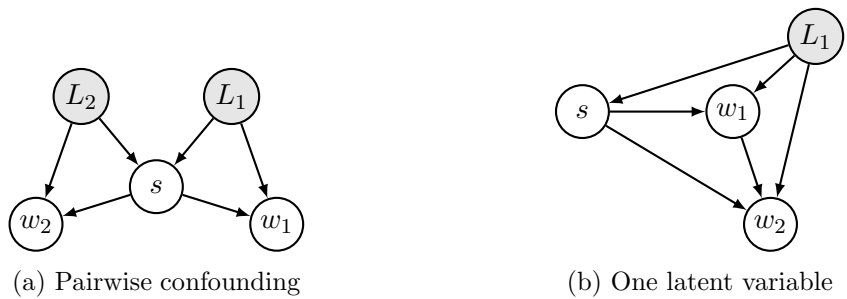
Applying this criterion to every pair of nodes, we simultaneously derive the number of confounders between the source and any other node $w$. Then, we can use the results from the previous section to estimate all parameters in the marginal model for $(w,s)$, that is, all edge weights $b_{wj}'$ for $j \in \text{conf}(w,s) \cup \{s\}$ and the cumulants of order at least $\ell+1$ of all $\eta_j(w,s) \in \text{conf}(w,s) \cup \{\epsilon_s'\}$.

But how can these pairwise pictures be combined to one overall graph? For example, if there is one latent confounding the variables $s$ and $w_1$, and one confounding $s$ and $w_2$,  the overall graph could be 
either of the graphs depicted in Figure \ref{fig:Same_pairwise_structure}.
To differentiate between these two models, we examine the cumulants in the marginal models: For each $w$ and each $\eta_j(w,s) \in \text{conf}(w,s)  \cup \{\epsilon_s'\}$, we collect its cumulants from order $\ell+1$ up to some fixed order $k_\text{max}$ into one cumulant vector $\omega_j(w,s)'$. If the right graph is correct, the same cumulant vector will be present in both marginal models, that is, $\omega_j(2,1)'= \omega_i(3,1)'$ for some choice of $i, j \in [2]$. In contrast, in the left graph, the cumulant vectors generically differ.
This generalizes to arbitrary graphs: Whenever a latent variable $L$ is an ancestor of $s$ and some other observed variables $w_1, \dots, w_m$, the same cumulant vector must occur in all the corresponding marginal models. Thus, by aligning the cumulants, we can associate the latent variables with their descendants. Note that for each marginal model, one of the estimated cumulant vectors does not correspond to a latent variable but to $\epsilon_s'$. This cumulant vector can differ for different $w$ even if $\epsilon_s$ is an ancestor of all of them since the noise terms $\epsilon'_s(w, s) = \sum_{j \in \text{an}(v)\setminus\text{conf}(s, w)} b_{sj} \eta_j$ in the marginal models might differ.

Because the causal effects in the marginal models coincide with those in the overarching model, we can now fill in all columns of $B$ corresponding to the source and its latent parents. Given our focus on a single valid option for $B$, we enumerate the latents $L_1, \dots L_m$ arbitrarily. This enumeration fixes the arrangement of the corresponding columns in $B$ since $\eta$ and $B$ can only be permuted simultaneously.
\subsection{Next Iteration}
In order to proceed to the next iteration, we want to remove the source and its parents from the data and compute $X_w^{(1)} = X_w -  b_{ws}\epsilon_s - \sum_{j=p+1}^m b_{wl}L_l$ for $w \neq s$ because the joint distribution of those random variables satisfies a structural equation model for the graph with the source and its parents removed. 
\begin{lemma}\label{lem:cumulant_regression} Assume there are $m$ confounders $L_1, \dots, L_m$  pointing to the source $s$, and let $b_{wj}, j={s, p+1, \dots, m}$, be a valid choice of parameters found in the first iteration. Now consider the joint distribution of all
\begin{align}\label{eq:cumulant_regression}
  X_w^{(1)} = X_w -  b_{ws}\epsilon_s - \sum_{j=p+1}^m b_{wj}L_j
  \end{align}
for $w \neq s$. Then, this distribution satisfies the structural equation model belonging to $\mathcal{G}$ with the nodes $s$, $L_1, \dots, L_m$ and all adjacent edges erased.
\end{lemma}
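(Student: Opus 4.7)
The plan is to reduce the problem to a block-matrix identity involving the true path matrix $B$. Two main steps are required: first, showing that $X^{(1)}_w$ does not depend on the particular valid choice used; second, verifying that the resulting expression matches $(X^{(1)}_w)_{w\neq s} = \tilde{B}\tilde{\eta}$, where $\tilde{B}$ is the path matrix for $\tilde{\mathcal{G}}$ and $\tilde{\eta}$ is a vector of mutually independent non-Gaussian exogenous sources.

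For the first step, every valid choice $(B', \eta')$ arises from the true $(B, \eta)$ by performing, at each observed node $v$, either no swap or a column swap $\eta_v \leftrightarrow L_j$ with $L_j \in \text{exog}(v)$. I would argue no such swap crosses the index set $S := \{s, p+1, \ldots, p+m\}$. Indeed, for $v \neq s$ and $L_j \in \text{exog}(v)$, if $L_j$ were also a parent of $s$, then $s$ would lie in the observed descendants of $L_j$, which equal the observed descendants of $v$; this forces $s$ to be a descendant of $v$, contradicting that $s$ is a source. The remaining case, a swap at $s$ itself, involves some $L \in \text{exog}(s) \subseteq \text{pa}_{\text{l}}(s) = \{L_1, \ldots, L_m\}$ and is therefore internal to $S$. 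Consequently $\sum_{k \in S} b'_{wk}\eta'_k = \sum_{k \in S} b_{wk}\eta_k$, and $X^{(1)}_w$ is unambiguous, so we may work with the true parameters.

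With this reduction,
\[
X^{(1)}_w \;=\; \sum_{v \neq s} b_{wv}\,\epsilon_v \;+\; \sum_{j > m} b_{wj}\,L_j, \qquad w \neq s.
\]
Let $\tilde{B}$ be the submatrix of $B$ obtained by deleting row $s$ and the columns indexed by $S$, and let $\tilde{\eta}$ be the corresponding subvector of $\eta$. Then $(X^{(1)}_w)_{w \neq s} = \tilde{B}\tilde{\eta}$, with $\tilde{\eta}$ having mutually independent non-Gaussian components. It remains to identify $\tilde{B}$ as the path matrix of $\tilde{\mathcal{G}}$, that is, $\tilde{B} = (I - \tilde{\Lambda})^{-1}\bigl[I \;\; \tilde{\Gamma}\bigr]$ where $\tilde{\Lambda}, \tilde{\Gamma}$ are the natural restrictions of $\Lambda, \Gamma$ to the surviving vertices, whose supports then automatically match $\tilde{\mathcal{G}}$. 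Because $s$ has no parents in $\mathcal{G}$, $\Lambda_{s,:}=0$; block inversion of $(I-\Lambda)(I-\Lambda)^{-1} = I$ then shows that the $s$-th row of $(I-\Lambda)^{-1}$ is $e_s^\top$ and that the principal submatrix obtained by deleting row and column $s$ equals $(I - \tilde{\Lambda})^{-1}$. Since $L_1,\ldots,L_m$ exhaust the latent parents of $s$, the $s$-th row of $\Gamma$ has zeros in positions $j > m$; combining this with the preceding identity propagates the analogous block relation to $(I-\Lambda)^{-1}\Gamma$. Assembling the two blocks yields the required form for $\tilde{B}$.

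The main technical hurdle is the invariance argument in the first step; once this is in place and we exploit that $s$ is a source with latent parent set exactly $\{L_1,\ldots,L_m\}$, the block-matrix manipulation is essentially routine.
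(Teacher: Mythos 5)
Your proof is correct and follows essentially the same route as the paper's: delete row $s$ and the columns indexed by $\{s, p+1, \dots, p+m\}$ from $B$ and verify, using that $s$ is a source (so $\Lambda_{s,:}=0$ and $\Gamma_{s,j}=0$ for $j>m$), that the resulting matrix factors as $(I-\tilde{\Lambda})^{-1}(I,\ \tilde{\Gamma})$ with $\tilde{\Lambda},\tilde{\Gamma}$ supported on the reduced graph. Your preliminary invariance argument, showing that no column swap producing an alternative valid parameterization crosses the index set $\{s, p+1, \dots, p+m\}$, is a correct additional step that the paper leaves implicit.
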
 
\begin{proof} Obtain $B', \eta', \Lambda', \Gamma'$ from $B, \eta, \Lambda, \Gamma$ by removing all the rows and columns that correspond to $s$ and its latent parents, that is, remove row $1$ and columns $1, p+1, \dots, p+m$ from $B$, the entries $\eta_1, \eta_{p+1}, \dots, \eta_{p+1}$ from $\eta$, row and column $1$ from $\Lambda$, and columns $1, \dots, m$ from $\Gamma$. 
Then, by definition of $X^{(1)}$
  \begin{align*}
    X^{(1)} = B' \eta'.
  \end{align*}
Moreover, $\Lambda$ is lower triangular, so $\Lambda'$ is still invertible and $(I-\Lambda')^{-1} = (I-\Lambda)^{-1}_{1:,1:}$. Therefore, 
$$B' = (I-\Lambda')^{-1} (I, \Gamma').$$ The sparsity pattern of $\Lambda', \Gamma'$ corresponds to the graph $\mathcal{G}$ with the nodes $s$, $L_1, \dots, L_m$ and all adjacent edges removed, which concludes the proof.
\end{proof}
Given that we only have access to the observed components $X$, acquiring data sampled according to the distribution of $X^{(1)}$ is not feasible. However, to apply our procedure, it suffices to know the cumulants. All causal effects appearing in the formula \eqref{eq:cumulant_regression} for the distribution can be inferred and computing the cumulant commutes with summation if the summands are independent random variables. Therefore, the only remaining question is whether the cumulants of the exogenous sources can be estimated.
\begin{lemma}\label{lem:regression_possible} Denote by $L_1, \dots ,L_m$ the latents parents of the source $s$. If there exist $m$ distinct observed nodes
$v_1, \dots, v_m \in [p]\setminus\{s\}$ such that $v_i$ is a child of $L_i$, then all cumulants of $\epsilon_s, L_1, \dots, L_m$ of order two and higher can be estimated.
\end{lemma}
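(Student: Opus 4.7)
The plan is, for each cumulant order $k \ge 2$, to reduce the estimation of $\omega^{(k)}_{\epsilon_s}, \omega^{(k)}_{L_1}, \ldots, \omega^{(k)}_{L_m}$ to an $(m+1) \times (m+1)$ linear system whose entries are already known from previous steps. The key observation is that $s$ is a source, so the only exogenous sources $\eta_r$ with $b_{s, \eta_r} \ne 0$ are $\epsilon_s$ and $L_1, \ldots, L_m$. By Lemma~\ref{lem:parametrization}, any cumulant of observables that contains the index $s$ at least once collapses to a sum over these $m+1$ sources alone, so all other unknown source cumulants drop out automatically.

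Concretely, I would write $\eta_0 := \epsilon_s$, $\eta_r := L_r$ for $r \in [m]$, $v_0 := s$, and use the $m+1$ equations arising from the cumulants $\cum(X_s, \ldots, X_s, X_{v_i})$ (with $k-1$ copies of $X_s$ and one $X_{v_i}$) for $i = 0, 1, \ldots, m$:
\begin{equation*}
\cum(X_s, \ldots, X_s, X_{v_i}) \;=\; \sum_{r=0}^m b_{s, \eta_r}^{k-1} \, b_{v_i, \eta_r} \, \omega^{(k)}_{\eta_r}.
\end{equation*}
Factoring $b_{s, \eta_r}^{k-1}$ out of the $r$-th column expresses the coefficient matrix as $B^{\star} D$, where $D = \mathrm{diag}\bigl(b_{s, \eta_r}^{k-1}\bigr)$ is invertible (because $b_{s, \epsilon_s} = 1$ and each $b_{s, L_r} = \gamma_{s, L_r} \ne 0$) and
\begin{equation*}
B^{\star} \;=\; \begin{pmatrix} 1 & \gamma_{s, L_1} & \cdots & \gamma_{s, L_m} \\ b_{v_1, s} & b_{v_1, L_1} & \cdots & b_{v_1, L_m} \\ \vdots & \vdots & \ddots & \vdots \\ b_{v_m, s} & b_{v_m, L_1} & \cdots & b_{v_m, L_m} \end{pmatrix}.
\end{equation*}
All entries of $B^{\star}$ and of the left-hand side cumulants are known: the $b$-entries were identified during source extraction, either directly from the marginal pair analysis of $(s, v_i)$, or, for a latent $L_r$ reaching $v_i$ only through $s$, via $b_{v_i, L_r} = b_{s, L_r} \, b_{v_i, s}$; the cumulants on the left are estimable from data. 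The remaining task is to show $\det B^{\star} \ne 0$.

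For the generic non-vanishing of $\det B^{\star}$, I would track the monomial $\mu := \prod_{i=1}^m \gamma_{v_i, L_i}$ in its Leibniz expansion, viewed as a polynomial in the edge weights $\lambda_{uv}, \gamma_{u L}$. Since $b_{v_i, s}$ depends only on $\lambda$'s, column $0$ of $B^{\star}$ contains no $\gamma$'s; and for $r \ge 1$ every entry of column $r$ is linear in the $\gamma_{\cdot, L_r}$'s. Hence $\gamma_{v_j, L_j}$ can appear only in column $j$, and within that column it appears in $b_{v_i, L_j}$ precisely when a path $L_j \to v_j \to \cdots \to v_i$ exists in $\mathcal{G}$, i.e., when $v_j$ is a (weak) ancestor of $v_i$. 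For a permutation $\sigma$ to contribute $\mu$ we therefore need $\sigma(0) = 0$ and, for every $j \in [m]$, $v_j$ to be an ancestor of $v_{\sigma^{-1}(j)}$. A non-trivial cycle of $\sigma|_{[m]}$ would chain these ancestries into a directed cycle among the pairwise distinct nodes $v_i$, contradicting the acyclicity of $\mathcal{G}$. Only the identity permutation thus contributes $\mu$, with coefficient $1$, so $\det B^{\star} \not\equiv 0$ as a polynomial, and genericity of the coefficients concludes the argument. The main obstacle is precisely this non-vanishing: a priori, different permutations in the Leibniz sum could cancel, but isolating the ``direct-edge'' monomial $\mu$---whose existence is guaranteed exactly by the hypothesis that each $v_i$ is a child of $L_i$---pins down a single surviving permutation.
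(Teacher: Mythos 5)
Your proof is correct, and its overall architecture matches the paper's: both set up the linear system obtained from Lemma~\ref{lem:parametrization} for cumulants containing the source index (so that all exogenous terms other than $\epsilon_s, L_1,\dots,L_m$ drop out), and both show that the hypothesis — a distinct observed child $v_i$ for each $L_i$ — forces the coefficient matrix to have full column rank generically. The only substantive difference is how that rank is certified. The paper keeps the full $p\times(m+1)$ system, writes its coefficient matrix as $(I-\Lambda)^{-1}M$ with $M=(I,\Gamma)_{:,\{s,p+1,\dots,p+m\}}$, and is done once it observes that $M$ has a nonzero permuted diagonal ($1$ in row $s$, $\gamma_{v_i,L_i}$ in row $v_i$): factoring out the invertible $(I-\Lambda)^{-1}$ reduces the determinant question to a matrix whose free entries are algebraically independent parameters, so no cancellation analysis is needed. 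You instead restrict to the square subsystem on rows $\{s,v_1,\dots,v_m\}$ and prove $\det B^{\star}\not\equiv 0$ directly by tracking the monomial $\prod_i\gamma_{v_i,L_i}$ through the Leibniz expansion, using acyclicity to eliminate non-identity permutations; this is sound (indeed, a $\lambda$-degree count already forces $\sigma=\mathrm{id}$ without the cycle argument), just more bookkeeping than the paper's factorization. Your handling of higher orders via the invertible diagonal factor $\mathrm{diag}(b_{s,\eta_r}^{k-1})$ and your remarks on where the needed $b$-entries come from are consistent with the paper.
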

\begin{proof} 
 We first consider the second-order cumulants. Without loss of generality, let $1$ be the source. From Lemma \ref{lem:parametrization},
\begin{align}\label{eq:omegas_overall_model} 
  \begin{pmatrix}
   1 & 1 & \dots & 1 \\
   b_{21} & b_{2,p+1} & \dots & b_{2,p+m} \\
   \vdots & \vdots & \ddots & \vdots \\
   b_{p1} & b_{2,p+1} & \dots & b_{p,p+m} 
  \end{pmatrix}
  \begin{pmatrix}
    \omega_{11} \\
    \omega_{p+1,p+1} \\
    \vdots \\
    \omega_{p+m,p+m}
  \end{pmatrix}
  =
  \begin{pmatrix}
    c_{11} \\
    c_{12} \\
    \vdots \\
    c_{1p}
  \end{pmatrix}
 \end{align} 
Denote the matrix in the equation by $\tilde{B}$. Its transpose  coincides with the columns $1, p+1, \dots, p+m$ of $B=(I-\Lambda)^{-1}(I, \Gamma)$. Since the first factor $(I-\Lambda)^{-1}$ is invertible, the rank of $\tilde{B}$ coincides with the rank of $M = (I, \Gamma)_{:,1, p+1, \dots, p+m}$.  Under the assumption in the Lemma, the columns and rows of $M$ can be permuted such that its diagonal is non-zero. Combining this with the genericity assumption, it follows that $M$ has full rank. Hence, $\tilde{B}$ has rank $\min(p, 1+m)=1+m$, which is the number of its columns.

For higher order cumulants, the same argument applies by considering the equations defining $c^{(k)}_{1\dots 1i}$ instead of $c^{(2)}_{1i}, i \neq 1$.
\end{proof}
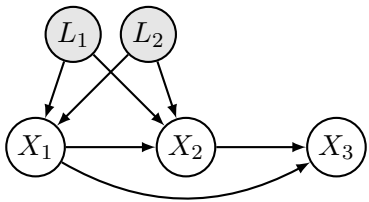
\begin{figure}
    \centering
    \begin{tikzpicture}[thick]
      \node[vertex] (1) at (-2,0) {$X_1$};
      \node[vertex] (2) at (0,0) {$X_2$};
      \node[vertex] (3) at (2,0) {$X_3$};
      \node[vertex, fill=gray!20] (4) at (-1.5, 1.5) {$L_1$};
      \node[vertex, fill=gray!20] (5) at (-0.5, 1.5) {$L_2$};
      \draw[edge] (1) --  (2);
      \draw[edge] (1) to[bend right=30]  (3);
      \draw[edge] (2) --  (3);
      \draw[edge] (4) -- (1);
      \draw[edge] (4) -- (2);
      \draw[edge] (5) -- (1);
      \draw[edge] (5) -- (2);
    \end{tikzpicture}
    \caption{Cumulant estimation might fail}
    \label{fig:regression_not_possible}
\end{figure}
\begin{example}
    As an illustrating example of the necessity of the assumption in the lemma, consider the graph in Figure \ref{fig:regression_not_possible}. Here, the linear equation system for the second-order cumulants of $\text{exog}(1)$ reads 
\begin{align*}
    \begin{pmatrix}
      c_{11} \\
      c_{12} \\
      c_{13} \\
    \end{pmatrix} =
  \begin{pmatrix}
    1 & 1 & 1 \\
    b_{21} & b_{24} & b_{25} \\
    b_{31} & b_{34} & b_{35} \\
    \end{pmatrix}
    \begin{pmatrix}
      \omega_{11} \\
      \omega_{44} \\
      \omega_{55} \\
    \end{pmatrix} 
    = 
  \begin{pmatrix}
    1 & 1 & 1 \\
    b_{21} & b_{24} & b_{25} \\
    \lambda_{32}b_{21} + \lambda_{31} & \lambda_{32}b_{24} + \lambda_{31} & \lambda_{32}b_{25} + \lambda_{31} \\
    \end{pmatrix}
    \begin{pmatrix}
      \omega_{11} \\
      \omega_{44} \\
      \omega_{55} \\
    \end{pmatrix},
  \end{align*} where the last equality holds 
since none of the latents points to node $3$. Consequently, the last row is a linear combination of the first two rows.
If one of the two latents also points to $3$ the equation system would become invertible.
\end{example}

\section{Practical Implementation}
Putting together the previous sections' results essentially leads to our proposed algorithm, as outlined in Algorithm \ref{alg:ReLVLiNGAM}. \footnote{Our code is available at \url{https://github.com/DanielaSchkoda/ReLVLiNGAM}.}
\begin{algorithm}[t]
  \caption{ReLVLiNGAM}\label{alg:ReLVLiNGAM}
\begin{algorithmic}[1]
\State \textbf{input} Data $X \in \mathbb{R}^{n \times p}$,  bound on pairwise confounding $\ell_{\max}$.
  \State $R \gets \{1, \dots, p\}$.
  \State $\hat{C}^{(2)}, \dots, \hat{C}^{(k_2)} \gets $ sample cumulants of $X$.
  \Repeat
  \State $s \gets$ find source($X$, $\ell_{\max}$).
  \State Estimate $\hat{b}_{ws}, \hat{b}_{w,1+p}, \dots, \hat{b}_{w,1+p+\ell_{sw}}$ for $w \in R\setminus \{s\}$. \hfill (Theorem \ref{thm:effect_sizes})
  \State Estimate the cumulants of the exogeneous sources in all marginal models. \hfill (Lemma \ref{lem:lin_eq_omega})
  \State Align the latent variables and fill in $\hat{B}_{:,(s,p+1, \dots, p+m_s)}$.
  \State Estimate the cumulants of $\text{exog}(s)\cup \{\epsilon_s\}$ in the overall model. \hfill (Equation \ref{eq:omegas_overall_model})
  \State $\hat{C}^{(2)}, \dots, \hat{C}^{(k_2)} \gets$ estimated cumulants of $X - \hat{B}_{:,(s,p+1, \dots, p+m_s)}X$. \hfill (Lemma \ref{lem:cumulant_regression})
  \State $R \gets R\setminus \{s\}$.
  \Until{$|R| = 1$.}
  \State From $\hat{B}$, calculate all possible solutions $\hat{B}^{(1)}, \dots, \hat{B}^{(h)}$.  \hfill (Section \ref{sec:lin_sem})
  \State \Return Estimated path matrices $\hat{B}^{(1)}, \dots, \hat{B}^{(h)}$.
\end{algorithmic}
\end{algorithm}
\begin{algorithm}[t]
  \caption{Find source}
\begin{algorithmic}[1]
  \State \textbf{input} Data $X \in \mathbb{R}^{n \times |R|}$,  bound on pairwise confounding $\ell_{\max}$.
  \For{each pair $(v, w)$}
  \State $\ell_{vw} \gets \min(\{l = 0, \ldots, \ell_{\max}: \text{rank}(A^{(\ell)}_{v \to w}) \leq l+1$\})
  \EndFor
  \State  \Return $v$ with such that $\sum_{w \neq v}\ell_{vw}$ is minimal.
\end{algorithmic}
\end{algorithm}
However, transitioning from theoretical results to finite sample size, some practical questions arise. Let $n$ be the sample size and denote the observed data matrix by $X \in \mathbb{R}^{p\times n}$. The first step of the algorithm is estimating the cumulants, which we achieve using the plug-in statistic, where sample moments of $X$ are calculated and then plugged into the equations for the cumulants. Finding the source relies on the rank condition from Theorem \ref{thm:rank_condition}. We compute the singular values $\sigma_1, \dots \sigma_{\ell+2}$ of $A^{(\ell)}_{v \to w}$ and accept the hypothesis that $\text{rank}(A^{(\ell)}_{v \to w}) \leq \ell+1$ if $\sigma_{\ell+2}/\sigma_1$ falls below or is equal to  a threshold $T$. Initially, we set $T = {0.08}{n^{-0.2}}$ and adjust it to $T = {0.2}(i-1){n^{-0.2}}$ in each later iteration $i$ to account for the expected increase in error. To ensure scale-freeness in this rank test, when forming $A^{(\ell)}_{v \to w}$, we do not use the cumulants of $X$ but of its scaled version $\tilde{X} = (X_1/\hat\sigma_1, \dots, X_p/\hat\sigma_p)$, where $\hat\sigma_i$ is the empirical variance of $X_i$. When faced with a non-unique minimum in Line 5, amongst all minima $v$, we opt for the one with the lowest average of ratios $\sum_{w \neq v}\sigma_{\ell+2}(v, w)/\sigma_1(v, w)$.

To find the causal effects $\hat{b}_{ws}, \hat{b}_{w,1+p}, \dots, \hat{b}_{w,1+p+\ell_{sw}}$, Theorem \ref{thm:effect_sizes} provides several equivalent polynomial equations, whose coefficients are cumulants. We confine to the equations that feature the most lower-order cumulants and take the mean of the solutions across the equations. For example, for $\ell=1$, the equations are all $3 \times 3$ minors of the matrix
\begin{align*}
  \begin{pmatrix}
    1 & b_{21} & b_{21}^2 \\
    c_{111} & c_{112} & c_{122} \\
    c_{1111} & c_{1112} & c_{1122} \\
    c_{2111} & c_{2112} & c_{2122}\\
  \end{pmatrix}
\end{align*}
that include the first row. We only use the minor selecting the rows $1, 2, 3$ and the minor selecting the rows $1, 2, 4$.

The next step groups the latents. We first estimate the cumulants of the exogenous sources using Lemma \ref{lem:lin_eq_omega} and then align the latents by aligning the cumulants. Two cumulant vectors are considered to match if their Euclidean distance falls below $0.1$.  This threshold could be further tuned but for our setting of standardized observations, we obtain good experimental performance from the given choice.

The increase in error in each iteration motivates a final minor adjustment in the algorithm: In iteration $i$, we already compute $\hat\ell_{vw}{(i)}$ for all $v, w \in R$. So, we can reuse this information when estimating $\ell_{vw}$ again in the next iteration: In iteration $i+1$, we set $$\ell_{\max} = \hat\ell_{vw}{(i)} - |\{L: L \text{ common confounder of $v$ and $w$ found in iteration $i$}\}|.$$

Combining all the results from above proves that the algorithm returns the true path matrices for infinite sample size.
\begin{theorem} Given the exact cumulants of $P^X \in \mathcal{M}(G)$, and setting all thresholds in the algorithm to $0$, if the condition of Lemma \ref{lem:regression_possible} is satisfied in every iteration, Algorithm \ref{alg:ReLVLiNGAM} returns all path matrices compatible with $P^X$. Moreover, the algorithm recognizes the non-fulfillment of the condition, since, in this case, the linear equation system to estimate the cumulants is underdetermined.
\end{theorem}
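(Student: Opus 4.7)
The plan is to proceed by induction on the iteration count $i$, equivalently on $|R|$. The invariant I would maintain is: at the start of iteration $i$, the cumulants $\hat{C}^{(k)}$ stored by the algorithm coincide exactly with the true cumulants of $X^{(i-1)}$, the random vector obtained from $X$ by successively removing previously identified sources and their latent parents via the regression formula in Lemma \ref{lem:cumulant_regression}; moreover, the columns of $\hat{B}$ already filled in coincide, for some fixed labelling of latents, with the corresponding columns of a valid path matrix for $P^X$. The base case $i=1$ is immediate since we are given the exact cumulants.

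For the inductive step I would verify the four main steps of the loop in turn. First, correctness of source identification follows from Lemma \ref{lemma:marg_distr}, which reduces each pairwise rank test to the two-variable model $\mathcal{M}_{2,\ell}$, combined with the generic rank claims of Theorem \ref{thm:rank_condition}: under genericity, $\min\{\ell : \text{rank}(A^{(\ell)}_{v\to w}) = \ell+1\}$ equals the number of confounders of $v,w$ when $v$ is an ancestor of $w$, and strictly exceeds it otherwise, so the source of the subgraph on $R$ is uniquely detected and $\ell_{sw}$ is correctly read off. Second, for each $w \in R \setminus \{s\}$, Theorem \ref{thm:effect_sizes} applied to the marginal model yields the $\ell_{sw}+1$ admissible values of $b_{ws}'$; by \eqref{eq:parameters_marginal_model} one of them equals the true $b_{ws}$, and by Lemma \ref{lem:lin_eq_omega} the corresponding cumulant vectors of the exogenous sources $\eta_j(w,s)$ are then uniquely recovered.

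The crucial and most delicate step is the alignment of latent variables across the $|R|-1$ marginal models. Under the genericity assumption on the cumulants of $\eta$, the cumulant vectors $\omega_j(w,s)$ for distinct latent ancestors of $s$ are pairwise distinct, and also distinct from those of the surrogate noises $\epsilon_s'(w,s) = \sum_{j\in \text{an}(s)\setminus\text{conf}(s,w)} b_{sj}\eta_j$, whose mixing weights depend on $w$. Matching equal cumulant vectors across marginal models therefore recovers, for each latent parent $L$ of $s$, its observed descendants in $R$, and hence the column $B_{:,L}$ up to the global permutation ambiguity of the latents. Fixing an arbitrary enumeration $L_1,\dots,L_{m_s}$ fixes the corresponding columns of $\hat{B}$. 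Lemma \ref{lem:regression_possible}, invoked at this iteration, guarantees that the coefficient matrix in \eqref{eq:omegas_overall_model} has full column rank, so the cumulants of $\epsilon_s,L_1,\dots,L_{m_s}$ in the overall model are uniquely solvable from the system. With these at hand, Lemma \ref{lem:cumulant_regression} implies that the update in line 10 produces the exact cumulants of $X^{(i)}$, restoring the invariant for iteration $i+1$.

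Once the loop terminates with $|R|=1$, a single valid path matrix $\hat B$ (for a fixed ordering of latents) has been recovered. The final enumeration step uses the discussion preceding Lemma \ref{lem:number_of_sparsest_choices}: every other compatible path matrix arises from swapping a column indexed by an observed node $v$ with a column indexed by some $L_j \in \text{exog}(v)$, and these swaps can be read off directly from the sparsity pattern of $\hat B$, yielding all $n_{\mathcal G}$ compatible path matrices. For the second assertion, note that if the condition of Lemma \ref{lem:regression_possible} fails in some iteration, the proof of that lemma shows that the coefficient matrix $\tilde B$ in \eqref{eq:omegas_overall_model} has rank strictly less than $1+m_s$, so the linear system used in line 9 is underdetermined, and the algorithm detects this as a rank deficiency and aborts. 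The main obstacle I anticipate is making the generic-distinctness argument for the latent cumulant vectors in the alignment step fully rigorous; I would handle it by showing that the polynomial identities (in the coefficients and in the $\omega^{(k)}_{jj\dots j}$) that would force two alignments to be indistinguishable across orders $\ell+1,\dots,k_{\max}$ are nontrivial, which can be certified by exhibiting a single parameter choice that violates them.
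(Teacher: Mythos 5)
Your proposal is correct and follows essentially the same route as the paper, which gives no explicit proof beyond the remark that ``combining all the results from above'' yields the theorem; your induction on iterations, with the invariant that the stored cumulants equal the exact cumulants of the reduced model and the filled-in columns of $\hat B$ are those of a valid path matrix, is precisely the intended combination of Lemmas \ref{lemma:marg_distr}, \ref{lem:lin_eq_omega}, \ref{lem:cumulant_regression}, \ref{lem:regression_possible} and Theorems \ref{thm:rank_condition}, \ref{thm:effect_sizes}. The one step you flag as delicate --- generic pairwise distinctness of the latent cumulant vectors used for alignment, and the converse of Lemma \ref{lem:regression_possible} needed for the detection claim --- is likewise left implicit in the paper, and your proposed genericity/witness argument is the standard way to close it.
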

\section{Simulations}
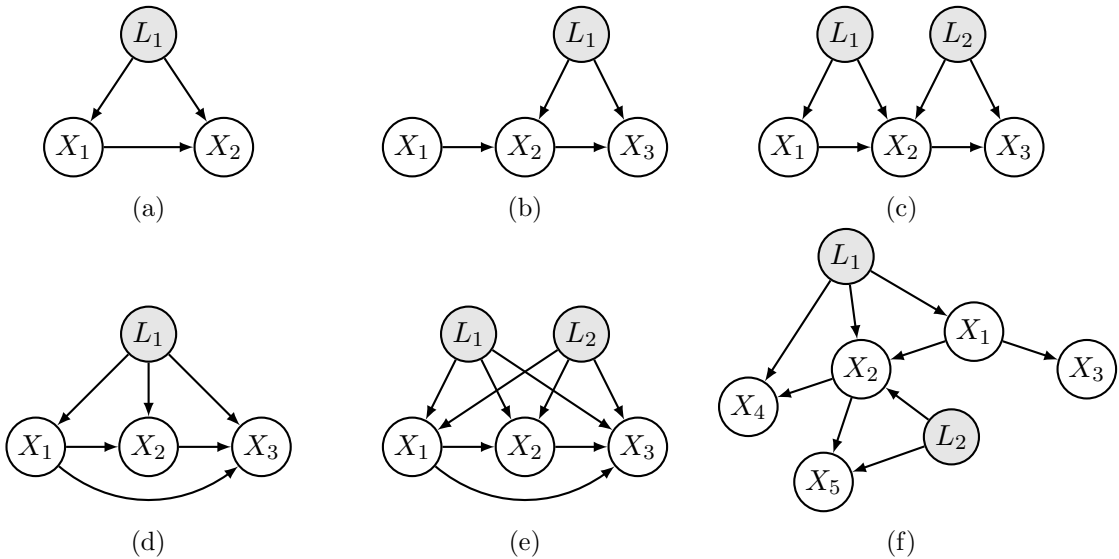
\begin{figure}[t]
  \centering
  \begin{subfigure}[t]{0.32\textwidth}
    \centering
    \begin{tikzpicture}[thick]
      \node[vertex] (1) at (-1,0) {$X_1$};
      \node[vertex] (2) at (1,0) {$X_2$};
      \node[vertex, fill=gray!20] (3) at (0,1.5) {$L_1$};
      \draw[edge] (1) --  (2);
      \draw[edge] (3) -- (1);
      \draw[edge] (3) -- (2); 
    \end{tikzpicture}
    \caption{}
  \end{subfigure}
  \begin{subfigure}[t]{0.32\textwidth}
    \centering
    \begin{tikzpicture}[thick]
      \node[vertex] (1) at (-1.5,0) {$X_1$};
      \node[vertex] (3) at (1.5,0) {$X_3$};
      \node[vertex] (2) at (0,0) {$X_2$};
      \node[vertex, fill=gray!20] (4) at (0.75,1.5) {$L_1$};
      \draw[edge] (1) --  (2);
      \draw[edge] (2) -- (3);
      \draw[edge] (4) -- (2);
      \draw[edge] (4) -- (3);
    \end{tikzpicture}
   \caption{ }
  \end{subfigure}
  \begin{subfigure}[t]{0.32\textwidth}
    \centering
    \begin{tikzpicture}[thick]
      \node[vertex] (1) at (-1.5,0) {$X_1$};
      \node[vertex] (3) at (1.5,0) {$X_3$};
      \node[vertex] (2) at (0,0) {$X_2$};
      \node[vertex, fill=gray!20] (4) at (-0.75,1.5) {$L_1$};
      \node[vertex, fill=gray!20] (5) at (0.75,1.5) {$L_2$};
      \draw[edge] (1) --  (2);
      \draw[edge] (2) --  (3);
      \draw[edge] (4) -- (1);
      \draw[edge] (4) -- (2);
      \draw[edge] (5) -- (2);
      \draw[edge] (5) -- (3);
    \end{tikzpicture}
    \caption{}
  \end{subfigure}
  \begin{subfigure}[t]{0.32\textwidth}
    \centering
    \begin{tikzpicture}[thick]
      \node[vertex] (1) at (-1.5,0) {$X_1$};
      \node[vertex] (3) at (1.5,0) {$X_3$};
      \node[vertex] (2) at (0,0) {$X_2$};
      \node[vertex, fill=gray!20] (4) at (0,1.5) {$L_1$};
      \draw[edge] (1) --  (2);
      \draw[edge] (1) to[bend right=40]  (3);
      \draw[edge] (2) --  (3);
      \draw[edge] (4) -- (1);
      \draw[edge] (4) -- (2);
      \draw[edge] (4) -- (3);
    \end{tikzpicture}
    \caption{}
  \end{subfigure}
  \begin{subfigure}[t]{0.32\textwidth}
    \centering
    \begin{tikzpicture}[thick]
      \node[vertex] (1) at (-1.5,0) {$X_1$};
      \node[vertex] (3) at (1.5,0) {$X_3$};
      \node[vertex] (2) at (0,0) {$X_2$};
      \node[vertex, fill=gray!20] (4) at (0.75,1.5) {$L_2$};
      \node[vertex, fill=gray!20] (5) at (-0.75,1.5) {$L_1$};
      \draw[edge] (1) --  (2);
      \draw[edge] (2) --  (3);
      \draw[edge] (4) -- (1);
      \draw[edge] (4) -- (2);
      \draw[edge] (4) -- (3);
      \draw[edge] (5) -- (1);
      \draw[edge] (5) -- (2);
      \draw[edge] (5) -- (3);
      \draw[edge] (1) to[bend right=40]  (3);
    \end{tikzpicture}
    \caption{}
  \end{subfigure}
  \begin{subfigure}[t]{0.32\textwidth}
    \centering
    \begin{tikzpicture}[thick]
      \node[vertex] (1) at (0,0.5) {$X_1$};
      \node[vertex] (2) at (-1.5,0) {$X_2$};
      \node[vertex] (3) at (1.5,0) {$X_3$};
      \node[vertex] (4) at (-3,-0.5) {$X_4$};
      \node[vertex] (5) at (-2,-1.5) {$X_5$};
      \node[vertex, fill=gray!20] (L1) at (-1.7,1.5) {$L_1$};
      \node[vertex, fill=gray!20] (L2) at (-0.3,-0.9) {$L_2$};
      \draw[edge] (1) --  (2);
      \draw[edge] (1) -- (3);
      \draw[edge] (2) --  (5);
      \draw[edge] (2) --  (4);
      \draw[edge] (L1) -- (1);
      \draw[edge] (L1) -- (2);
      \draw[edge] (L1) -- (4);
      \draw[edge] (L2) -- (2);
      \draw[edge] (L2) -- (5);
    \end{tikzpicture}
    \caption{}
  \end{subfigure}
  \caption{Settings.}
  \label{fig:graphs_simulation_study}
\end{figure}
In simulation studies, we compare the performance of our method with the RICA method proposed in \cite{Salehkaleybar2020}.  Specifically, we use RICA as an idealized benchmark by providing it with the true number of latent variables $\ell$.
Before delving into the details of the simulation setup, we highlight some relevant aspects of RICA: Given the number of latents, it leverages overcomplete independent component analysis, aiming to compute a path matrix $\hat{B}$ such that the corresponding exogenous sources $\hat{\eta}$ are close to having independent components and far from being Gaussian. 
Since the resulting $\hat{B}$ does not obey any sparsity constraints, as a final step, bootstrap and a t-test are employed to prune non-significant causal effects. Nonetheless, the resulting $\hat{B}$ might correspond to a cyclic graph. 
\begin{figure}
\begin{subfigure}{0.32\textwidth}
      \includegraphics[width=\linewidth]{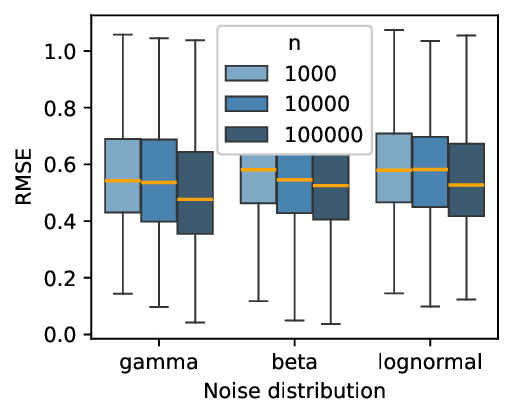}
      \caption{RMSE}
    \end{subfigure}
    \begin{subfigure}{0.32\textwidth}
      \includegraphics[width=\linewidth]{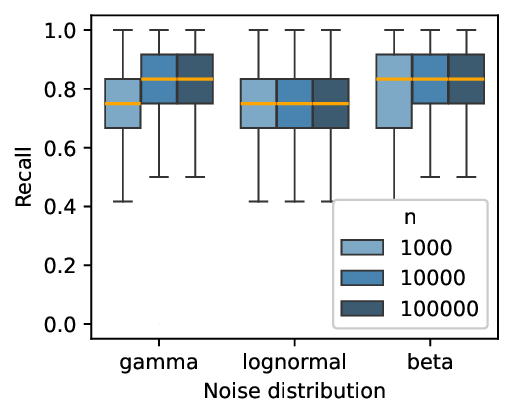}
      \caption{Recall}
    \end{subfigure}
    \begin{subfigure}{0.32\textwidth}
      \includegraphics[width=\linewidth]{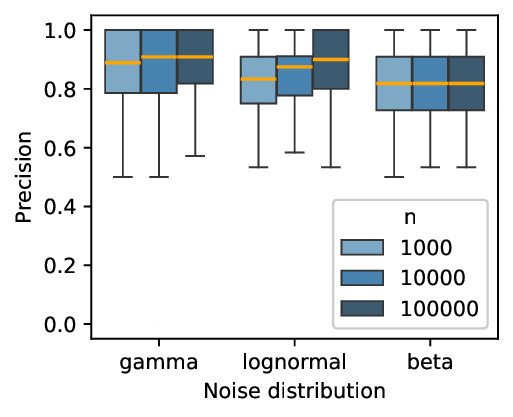}
      \caption{Precision}
    \end{subfigure}
    \caption{Simulation results for setting e) with ReLVLiNGAM. The distribution of $\eta$ varies, and $\ell_{\text{max}}$ is correctly specified.}
    \label{fig:simulation_results_different_noise_distributions}
     \end{figure}

To sample the data, we use the six graphs shown in Figure \ref{fig:graphs_simulation_study}, the edge weights are chosen uniformly from $[-0.9, -0.5] \cup [0.5, 0.9]$, and $\eta$ is drawn from a gamma, log-normal, or beta distribution. Afterwards we randomly permute the variables $X_1, \dots, X_p$ to establish a random topological order. However, varying the noise distribution seems to have little impact on the results, as shown in Figure \ref{fig:simulation_results_different_noise_distributions}. Therefore, for all of the following simulation results, we focus on gamma-distributed $\eta$. For each setting, we perform $1000$ replications and measure precision and recall regarding the existence of causal paths and the RMSE of $\hat{B}$. Here, a subtlety arises from the non-uniqueness of $B$. To overcome the arbitrary rescaling, we have so far used the convention of setting the edge from a latent to its oldest child to $1$. However, since \cite{Salehkaleybar2020}'s method does not necessarily return an acyclic graph, this convention no longer makes sense. Instead, we follow their suggestion to divide each column of $B$ and $\hat{B}$ by the entry with the maximum absolute value. What remains is the possibility to permute the columns. For our method, we compute all $n_\mathcal{G}$ options for $\hat{B}_i$ as explained in section \ref{sec:lin_sem} and take the one that yields the smallest RMSE. Again, this procedure is not well defined for a potentially cyclic graph. Thus, for the RICA method, we consider any permutation of the columns. If $B$ and $\hat{B}$ differ in the number of columns, we pad the smaller matrix with zeros to compute the RMSE.

For RICA, we need to specify the overall number of latents $\ell$, while our algorithm requires an upper bound on the pairwise confounding $\ell_\text{max}$. We consider two options, namely the actual highest number of pairwise confounding within the graph, so $\ell_\text{max}=2$ in setting e) and $1$ in all remaining settings, as well as this actual value increased by one. 
\ref{fig:simulation_results_all_methods}
  \begin{figure}
    \centering
    \begin{subfigure}[t]{1\textwidth}
      \includegraphics[width=0.32\linewidth]{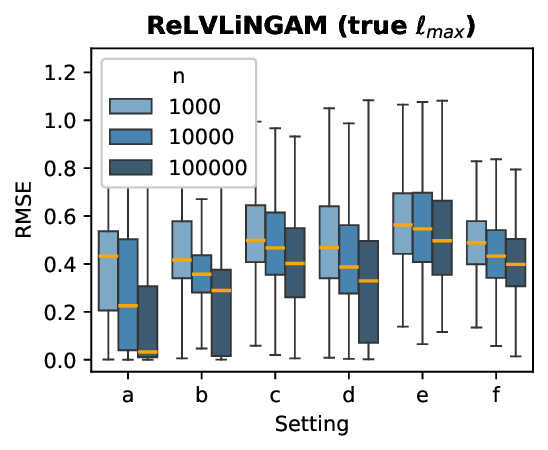}
      \includegraphics[width=0.32\linewidth]{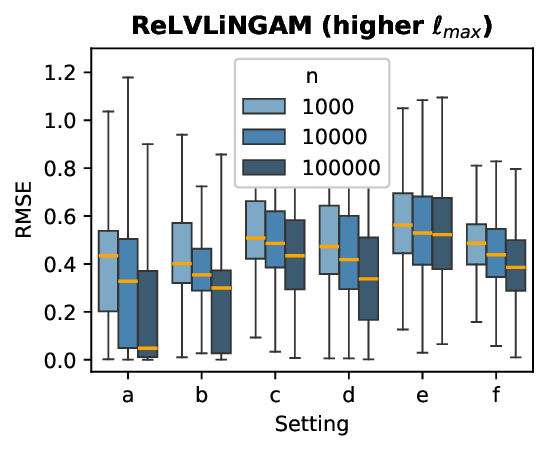}
      \includegraphics[width=0.32\linewidth]{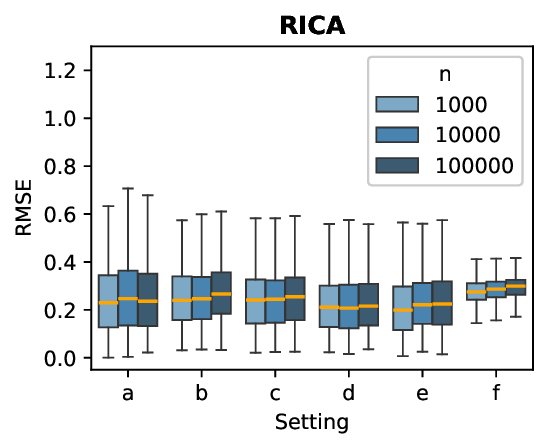}
      \caption{RMSE}
    \end{subfigure}
    
    \vspace{0.2cm}
    
    \begin{subfigure}[t]{1\textwidth}
      \includegraphics[width=0.32\linewidth]{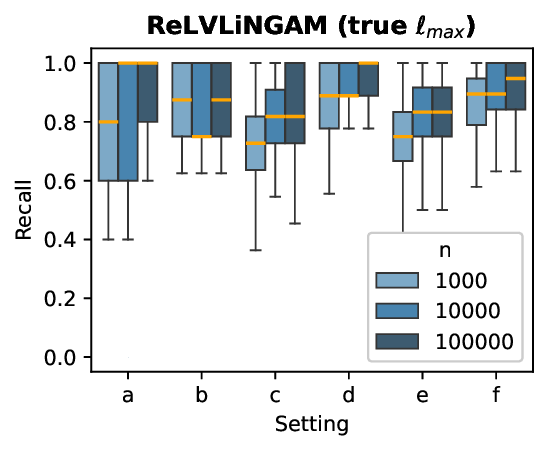}
      \includegraphics[width=0.32\linewidth]{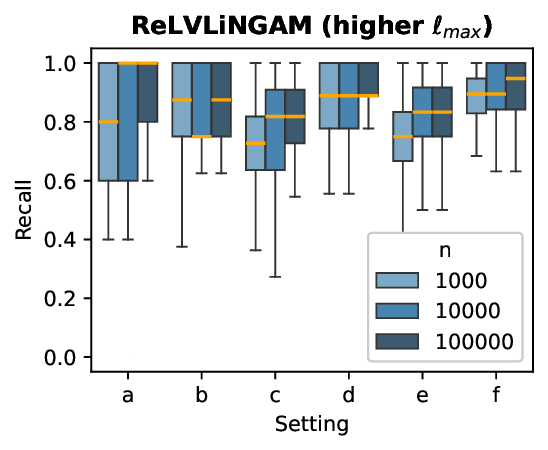}
      \includegraphics[width=0.32\linewidth]{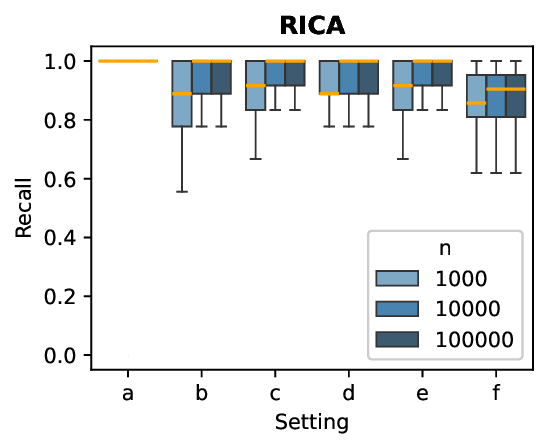}
      \caption{Recall}
    \end{subfigure}
    
    \vspace{0.2cm}
    
    \begin{subfigure}[t]{1\textwidth}
      \includegraphics[width=0.32\linewidth]{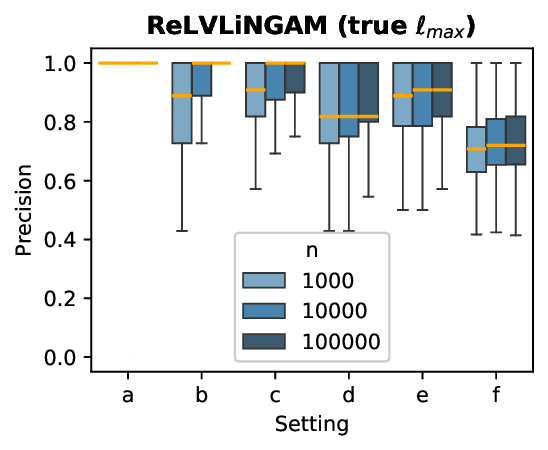}
      \includegraphics[width=0.32\linewidth]{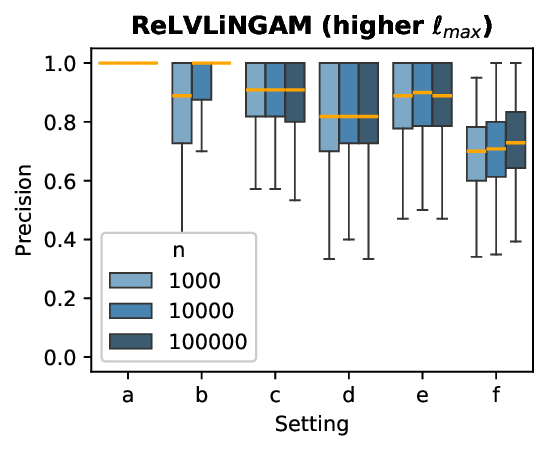}
      \includegraphics[width=0.32\linewidth]{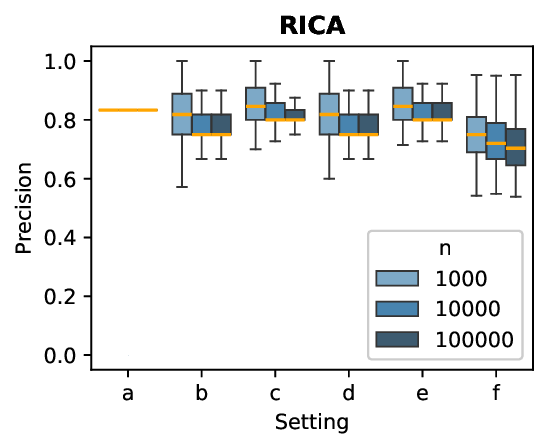}
      \caption{Precision}
    \end{subfigure}
    \caption{Simulation results for $\eta$ gamma-distributed and varying sample size.}
    \label{fig:simulation_results_all_methods}
     \end{figure}
     
RICA tends to excel in RMSE, particularly for low sample sizes. However, its performance does not improve notably with higher sample sizes. This difference in performance, especially the higher variance for our method, might be attributed to the algorithm architectures: Our approach first estimates the graph's structure and only afterward infers the edge weights, where substantial errors can be expected whenever there are errors in the graph estimation. In contrast, RICA searches directly for the best-fitting path matrix, knowing the correct $\ell$. Turning to precision and recall, ReLVLiNGAM exhibits higher precision, while RICA outperforms in terms of recall. This outcome is not surprising given that our algorithm is forced to produce a DAG, thereby constraining nearly half of the entries of $B$ to be $0$.

Across the different settings, RICA shows similar performance, while our method declines in performance for larger numbers of nodes as errors accumulate throughout the iterations. Nevertheless, causal effects from nodes positioned early in the topological order may still be estimated reasonably accurately, whereas, with incorrectly specified ${\ell}$ in RICA, we can expect the entire estimated path matrix to deviate significantly from the truth.

Comparing the two choices of $\ell_{\text{max}}$, choosing a higher value only marginally reduces performance, indicating that our method is robust to misspecified $\ell_\text{max}$.  In other words, our ReLVLiNGAM achieves state-of-the-art estimation accuracy without needing to know (or very accurately estimating) the number of latent variables.

\section{Discussion}

We demonstrated that in a linear non-Gaussian structural equation model featuring latent confounding, the graph structure can be uniquely identified based on cumulants of the observed distribution.  In doing so, we investigated which order of cumulants is sufficient for this purpose and showed how this order is determined by the number of latent variables. 

For causal discovery, we introduced a consistent algorithm that iteratively identifies a source node of a causal diagram and infers the number of its latent parents using a rank constraint on a matrix formed from cumulants. For estimation of the source's causal effects on its descendants, the algorithm leverages suitable polynomial equations. In our simulations, we demonstrated that our algorithm accurately identifies the number of latent variables, even when the upper bound on pairwise confounding is not tightly set, which represents a significant advantage over existing OICA approaches. In addition, our method improves on that proposed by \cite{Cai2023}, by relaxing the assumptions the true graph has to satisfy. Specifically, our only requirement is that locally the number of latent variables is lower than the number of observed variables.

We remark that the identifying equations we derived can also be used to estimate specific causal effects when the graph is already known. More generally, the iterative nature of our method could be exploited to incorporate prior knowledge.  Finally, we highlight that an interesting problem for future research would be to develop extensions of our algorithm that are able to accommodate sparse, high-dimensional settings.


\acks{This project has received funding from the European Research Council (ERC) under the European Union’s Horizon 2020 research and innovation programme (grant agreement No 883818). Daniela Schkoda acknowledges support by the DAAD programme Konrad Zuse Schools of Excellence in Artificial Intelligence, sponsored by the Federal Ministry of Education and Research. Elina Robeva was supported by an NSERC Discovery Grant (DGECR-2020-00338) and a Canada CIFAR AI Chair Award (AWD-028752 CIFAR 2024).}


\newpage

\appendix
\section{} This appendix contains the proofs of Lemma \ref{lem:number_of_sparsest_choices}, Theorem \ref{thm:rank_condition} and Lemma \ref{lemma:marg_distr}.\vskip 0.1in

\begin{proof}\textbf{of Lemma \ref{lem:number_of_sparsest_choices}} Assume $P^X \in \mathcal{M}_\mathcal{G}$ with path matrix $B$. Then, every other compatible path matrix $B'$ can be obtained by swapping the $v$th and $(w+p)$th columns in $B$ for some $v \in [p]$ and $L_w \in \text{exog}(v)$. In formulas $B=B' S$, where $S\in\mathbb{R}^{p+\ell\times p+\ell}$ is the permutation matrix obtained from the identity matrix by swapping columns $v$ and $w+p$.  We claim that the coefficients for the observed variables corresponding to $B'$ form the matrix  $\Lambda'=\Lambda + (\Gamma_{:,w}-e_v)\cdot(I-\Lambda)_{v,:}$. Indeed, \ \begin{equation}
\begin{aligned}\label{eq:other_compatible_Lambda}
  (I-\Lambda')B' &= (I-\Lambda - (\Gamma_{:,w}-e_v)\cdot(I-\Lambda)_{v,:})BS\\
  &= (I-\Lambda)BS - (\Gamma_{:,w}-e_v)\cdot(I-\Lambda)_{v,:}BS\\
  &= (I, \Gamma)S - (\Gamma_{:,w}-e_v)\cdot (I, \Gamma)_{v,:}S.
\end{aligned}
  \end{equation}
Note that for every $L_w \in \text{exog}(v)$, $v$ is its unique oldest child such that $\gamma_{vw}=1$. Hence $I_{vv}$ and $\gamma_{vw}$ coincide, yielding that $((I, \Gamma)_{v,:}S)=(I, \Gamma)_{v,:}$.
In addition, \begin{align}\label{eq:other_compatible_Lambda_2}(\Gamma_{:,w}-e_v)\cdot (I, \Gamma)_{v,:}
  &= (\Gamma_{:,w}-e_v)\cdot 
\begin{pmatrix}
  e_v^T & \gamma_{v,p+1} & \cdots & \gamma_{v,p+\ell}
\end{pmatrix}.
\end{align} 
Hence, 
\begin{equation*}
    [(I-\Lambda')B']_{:,:p} = [(I, \Lambda)S]_{:,:p} - (\Gamma_{:,w}-e_v)\cdot e_v^T = I.
\end{equation*}
which shows the claim. Knowing $\Lambda'$, we can calculate $\Gamma'$ as
\begin{align*}
    \Gamma' = (I - \Lambda') B'_{:,(p+1):}.
\end{align*}
Again using \eqref{eq:other_compatible_Lambda} and \eqref{eq:other_compatible_Lambda_2}, we see that 
\begin{align*}
    (I - \Lambda') B'_{:,(p+1):} = [(I, \Gamma)S]_{:, (p+1):} - (\Gamma_{:,w}-e_v)\cdot 
\begin{pmatrix}
\gamma_{v,p+1} & \cdots & \gamma_{v,p+\ell}
\end{pmatrix}.
\end{align*}
So, $(I-\Lambda')B' = (I, \Gamma')$ for $\Gamma'$ defined by 
\begin{align*}\Gamma'_{:,j}=\begin{cases}\Gamma_{:,j} -\gamma_{v,j}(\Gamma_{:,w}-e_v) &\text{for }j \neq w,\\ 
e_v-1(\Gamma_{:,j}-e_v) 
&\text{otherwise. }
\end{cases} 
\end{align*} 
To compare the sparsity patterns of $(\Lambda, \Gamma)$ and $(\Lambda', \Gamma')$, we write out the single entries:
\begin{align*}
  \lambda'_{ij}&=\lambda_{ij}+(\gamma_{iw}-\delta_{iv})(\delta_{vj}-\lambda_{vj}), \\
  \gamma'_{ij}&=\begin{cases}\gamma_{ij}-\gamma_{vj}(\gamma_{iw}-\delta_{iv})
  &\text{for }j \neq w,\\ 
    -\gamma_{ij}+2\delta_{iv}
    &\text{otherwise. }
  \end{cases} 
  \end{align*} 
From the genericity assumption, the graph $\mathcal{G}'$ encoding the sparsity pattern of $\Lambda'$ and $\Gamma'$ cannot contain fewer edges than $\mathcal{G}$. In particular, $\mathcal{G}$ is the unique minimal graph. Conversely, $\mathcal{G}'$ might contain additional edges. First, note that the part of the formula stating $\gamma_{ij} = -\gamma_{ij}+2\delta_{iv}$ for $j=w$ does not result in new edges since $\gamma_{vw}$ is already non-zero. Since $(\gamma_{iw}-\delta_{iv})(\delta_{vj}-\lambda_{vj}) \neq 0$ if and only if
$$i \neq v, \gamma_{iw}\neq 0, \text{ and either }v=j \text{ or } \lambda_{vj}\neq 0,$$ and $\gamma_{vj}(\gamma_{iw}-\delta_{iv}) \neq 0, j\neq w$ if and only if
$$\gamma_{vj} \neq 0, \gamma_{iw} \neq 0, i \neq v, \text{ and } j \neq w,$$ the graph $\mathcal{G}'$ incorporates
\begin{itemize}
    \item the edge $v \to i$ for $i\in [p]\setminus\{v\}$ whenever in the original graph $w \to i$;
\item the edge $j \to i$ for $i\in [p]\setminus\{v\}$, $j \in [p+\ell]\setminus\{w\}$ whenever in the original graph $j \to v$ and $w \to i$.
\end{itemize}
 Therefore, to not introduce new edges, we must swap $\epsilon_v$ with a latent $L_w$ whose children are already children of $v$. The other way round, all siblings of $v$ need to be children of $L_w$. This yields the formula for $n_\mathcal{G, \text{sparse}}$.
\end{proof}

For proving Theorem \ref{thm:rank_condition}, we use the following Lemma. Throughout the proofs of the lemma and the theorem, we write $\mathbb{R}^{J} = \text{span}\{e_i : i \in I\} \subseteq \R^m $ for $I \subseteq [m]$, and we assume all operations between two vectors take place pointwise, that is, $xy, x/y, x^k$ represent pointwise product, division, and power, respectively.
\begin{lemma}\label{lem:lemma_for_thm_rank_cond} Let $\mathcal{X}, \mathcal{Y} \subseteq \R^m$ two vector spaces satisfying the following: 
\begin{enumerate}
    \item[a)] The spaces are generic in the sense that for both, $\mathcal{Z} = \mathcal{X}, \mathcal{Y}$, \[\dim(\mathcal{Z} + \R^I) = \max( \dim(\mathcal{Z}) + |I|, m)\]for all index sets $I\subseteq [m]$.
    \item[b)] $\dim(\mathcal{X}) + \dim(\mathcal{Y}) \leq m$.
\end{enumerate}
 Then, the set 
$W = \{w \in \R^m: \mathcal{Y} \cap w\mathcal{X} \neq \{0\}\}$
has Lebesgue measure zero.
\end{lemma}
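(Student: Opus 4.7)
The plan is to recast $w \in W$ as a polynomial condition in $w$ and exhibit one such polynomial that is not identically zero, so that $W$ lies in an algebraic hypersurface of Lebesgue measure zero.

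Set $d_X = \dim \mathcal{X}$ and $d_Y = \dim \mathcal{Y}$. If either vanishes, $W$ is empty, so assume both are positive. Fix bases arranged as columns of matrices $X \in \R^{m \times d_X}$ and $Y \in \R^{m \times d_Y}$, and let $WX$ denote the matrix whose $i$th row equals $w_i$ times the $i$th row of $X$, so that $w\mathcal{X}$ is its column span. The condition $\mathcal{Y} \cap w\mathcal{X} \neq \{0\}$ is equivalent to the $m \times (d_X+d_Y)$ matrix $[Y \mid WX]$ having rank strictly less than $d_X+d_Y$. By hypothesis (b) this matrix is tall, so the rank drop is equivalent to the simultaneous vanishing of all of its $(d_X+d_Y)\times(d_X+d_Y)$ minors, each of which is a polynomial in $w$. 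Hence it suffices to produce a single minor that is not the zero polynomial.

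To do so, I will pick disjoint index sets $T, S \subseteq [m]$ with $|T| = d_X$ and $|S| = d_Y$, which exist because $d_X + d_Y \leq m$, and consider the minor $M_R$ of $[Y \mid WX]$ indexed by rows $R = T \cup S$. Laplace expansion along the $d_X$ columns coming from $WX$ yields
\[
M_R(w) = \sum_{T' \subseteq R,\, |T'| = d_X} \pm \det\bigl(Y|_{R\setminus T'}\bigr)\, \det\bigl(X|_{T'}\bigr) \prod_{i \in T'} w_i,
\]
where $X|_{T'}$ and $Y|_{S'}$ denote square row-submatrices. The squarefree monomials $\prod_{i \in T'} w_i$ are linearly independent as polynomials in $w$, so it is enough to show that one coefficient is non-zero. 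I will single out the monomial $\prod_{i \in T} w_i$, whose coefficient is $\pm \det(Y|_S)\det(X|_T)$.

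The crux is to verify that $\det(X|_T)$ and $\det(Y|_S)$ are non-zero, and this is precisely what the genericity hypothesis (a) delivers. Applied to $\mathcal{X}$ with $I = [m]\setminus T$, (a) gives $\dim(\mathcal{X} + \R^{[m]\setminus T}) = m$; the Grassmann dimension formula then forces $\mathcal{X} \cap \R^{[m]\setminus T} = \{0\}$, so the coordinate projection $\mathcal{X} \to \R^T$ is a linear isomorphism, equivalently $\det(X|_T) \neq 0$. The identical argument applied to $\mathcal{Y}$ with $I = [m]\setminus S$ yields $\det(Y|_S) \neq 0$. Therefore $M_R$ is a non-zero polynomial in $w$, and $W \subseteq \{M_R = 0\}$ has Lebesgue measure zero. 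The only delicate point is the reduction to a single minor whose leading coefficient is identifiable; the Laplace expansion and the dimension-count deductions are routine, and sign bookkeeping is immaterial because we need only non-vanishing. I do not anticipate any serious obstacle beyond this conceptual translation.
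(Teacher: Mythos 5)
Your proof is correct, and it takes a genuinely different route from the paper. The paper decomposes $W$ according to which coordinates of $x$ vanish, parametrizes each piece $W(I)$ by a smooth map from a domain of dimension strictly less than $m$, and invokes Sard's theorem to conclude each image has measure zero. You instead show that $W$ is contained in the zero locus of a single polynomial in $w$: a maximal minor $M_R$ of $\bigl[Y \mid \mathrm{diag}(w)X\bigr]$ with row set $R = T\cup S$, whose generalized Laplace expansion is a linear combination of distinct squarefree monomials $\prod_{i\in T'}w_i$, and whose coefficient on $\prod_{i\in T}w_i$ is $\pm\det(Y|_S)\det(X|_T)$; condition a) (read with $\min$ in place of the paper's evident typo $\max$) forces $\mathcal{X}\cap\R^{[m]\setminus T}=\{0\}$ and $\mathcal{Y}\cap\R^{[m]\setminus S}=\{0\}$, hence both determinants are nonzero and $M_R\not\equiv 0$. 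This is more elementary (no Sard, no case analysis on supports) and yields the slightly stronger conclusion that $W$ lies in a proper algebraic hypersurface. One small imprecision: $\mathcal{Y}\cap w\mathcal{X}\neq\{0\}$ is not \emph{equivalent} to $\mathrm{rank}\bigl[Y\mid \mathrm{diag}(w)X\bigr]<d_X+d_Y$ (the rank can also drop because $\mathrm{diag}(w)X$ itself is column-rank-deficient), but only the implication from left to right is needed for the containment $W\subseteq\{M_R=0\}$, and that direction does hold since $Y$ has full column rank.
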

\begin{proof} 
Our strategy involves parameterizing $W$ using a lower-dimensional subspace of $\R^{\ell+1}$. 
To this end, we  rewrite $W$ as 
\begin{align*}
  W&= \{w : y = w x \text{ for some }x \in \mathcal{X}\setminus \{0\}, y \in \mathcal{Y}\setminus \{0\}\} \end{align*}
As we seek to express $w$ in terms of $x$ and $y$ while avoiding division by zero, we distinguish which entries of $x$ are zero and decompose $W$ as
  \begin{align*}
    W  &=\underset{\substack{I \subseteq [m], \\ |I|<\max(\dim(\mathcal{X}), \dim(\mathcal{Y}))}}{\bigcup} W(I),
\end{align*}
with  \[W(I) = \left\{ w: w_{I^\comp} = y_{I^\comp}/ x_{I^\comp} \text{ for } x \in \mathcal{X}, y \in \mathcal{Y}  \text{ with } x_{I} \equiv y_{I} \equiv 0, x_{j} \neq 0  \text{ for all } j \in I^\comp\right\}.\]
When showing that each $W(I)$ has measure zero, we can, without loss of generality, restrict to $I=(1, \dots, i)$. Denote by $d_1, d_2$ the dimensions of $\mathcal{X} \cap \mathbb{R}^{I}, \mathcal{Y} \cap \mathbb{R}^{I}$, respectively, and let $X, Y$ be matrices whose columns form a basis of the two spaces. Then, $W(I)$ is the image of the map
\[f:\mathbb{R}^{i} \times  U  \times \mathbb{R}^{d_2} \to \mathbb{R}^{\ell+1}, (\alpha, \beta, \gamma) \mapsto \begin{pmatrix}
  \alpha \\
  (Y\gamma)_{I^\comp}/ (X\beta )_{I^\comp}
\end{pmatrix}
\] with $U = \{\beta: (X\beta)_j \neq 0 \text{ for all } j \in I^\comp\} \subseteq \mathbb{R}^{d_1}$. Inconveniently, this definition space might have dimension $\ell+1$. However, we can reduce it by exploiting that we can fix the scale of $\beta$ without changing the image.  We once again split $W(I)$ up to avoid division by zero, as
\[W(I) = \text{im}(f) = f(\mathcal{D}_0) \cup f(\mathcal{D}_1)\]
with $\mathcal{D}_0 =\mathbb{R}^{i} \times  \{\beta \in U: \beta_1 = 0\}  \times \mathbb{R}^{d_2}$ and $\mathcal{D}_1 = \mathbb{R}^{i} \times  \{\beta \in U: \beta_1 \neq 0\}  \times \mathbb{R}^{d_2}$. Now, 
$f(\mathcal{D}_1) = f(\tilde{\mathcal{D}}_1)$ for \[\tilde{\mathcal{D}}_1 = \mathbb{R}^{i}  \times \{\beta \in U: \beta_1 = 1 \}\times \mathbb{R}^{d_2}\]
since for every $(\alpha, \beta, \gamma) \in \mathcal{D}_1$,
\[f(\alpha, \beta, \gamma) = f(\alpha, \beta/\beta_1, \gamma/\beta_1).\]
Now the dimensions \[\dim(\mathcal{D}_0) = \dim(\tilde{\mathcal{D}}_1) = i + (d_1-1) + d_2, \]
are sufficiently small, that is, lower than $\ell+1$. To see that, we use that assumption a) yields
\[d_2 = \text{dim}(\mathcal{Y}\cap \mathbb{R}^{I^\comp})= \text{dim}(\mathcal{Y})+\text{dim}(\mathbb{R}^{I^\comp})-\text{dim}(\mathcal{Y} + \mathbb{R}^{I^\comp}) = \min(0, \dim(\mathcal{Y}) + (m-i) - m).\]
Similarly, \[d_1 = \text{dim}(\mathcal{Y}\cap \mathbb{R}^{I^\comp}) = \min(0, \dim(\mathcal{X}) + (m-i) - m).\] Summing these results up and then using assumption b), we obtain
\[d_1 + d_2 - 1 + i \leq (\dim(\mathcal{X})-i)  + (\dim(\mathcal{Y})-i) - 1 +i = \dim(\mathcal{X})  + \dim(\mathcal{Y})-i - 1  < m.\]
Since $f$ is differentiable and $\mathcal{D}_0, \tilde{\mathcal{D}}_1 \subseteq \R^{d_1 + d_2 - 1 + i}$ are open, by Sard's Theorem \cite[Chapter 6]{Lee2012}, both, $f(\mathcal{D}_0), f(\tilde{\mathcal{D}}_1) \subseteq \R^m$ have measure zero and so has $W$ as a countable union of measure zero sets.
\end{proof}
\begin{proof}\textbf{of Theorem \ref{thm:rank_condition}} 
a) The matrix $A_{1\to 2}^{(k_1, \ldots, k_2)}$ consists of cumulants $c^{(k)}_{i_1\ldots i_k}$ where at least one of the indices $i_1,\ldots, i_k$ equals one. For these cumulants, Lemma \ref{lem:parametrization} gives
\begin{align*}
c^{(k)}_{i_1\ldots i_k} 
&= \sum_{j=1}^{2+\ell} \cumeta{k}{j} (b_{i_1 j} \cdots b_{i_k j})\\
&= \sum_{j=1}^{2} \cum^{(k)}(\epsilon_j) (b_{i_1 j} \cdots b_{i_k j}) + \sum_{j=1}^{\ell} \cum^{(k)}(L_j) (b_{i_1 ,j+2} \cdots b_{i_k,j+2})\\
&=  \cum^{(k)}(\epsilon_1) (b_{i_1 1} \cdots b_{i_k 1}) + \sum_{j=1}^{\ell} \cum^{(k)}(L_j) (b_{i_1 ,j+2} \cdots b_{i_k ,j+2})\\
&=  \cum^{(k)}(\epsilon_1) b_{21}^{\#\{i_j=2\}} + \sum_{j=1}^{\ell} \cum^{(k)}(L_j) b_{2,j+2}^{\#\{i_j=2\}}
\end{align*}
where the penultimate equality follows since at least one index $i_r=1$ and the corresponding $b_{i_r2}=0$, resulting in $b_{i_12}\cdots b_{i_k2}=0$.
Therefore, $A^{(k_1, \ldots, k_2)}$ can be written as $A^{(k_1, \ldots, k_2)} = MN$ for
\begin{equation}\label{eq:matrix_M}
 M = \begin{pmatrix}
  \cumeta{k_1}{1} & \cumeta{k_1}{3} & \cdots & \cumeta{k_1}{\ell+2}\\
  \cumeta{k_1+1}{1} & \cumeta{k_1+1}{3} & \cdots & \cumeta{k_1+1}{\ell+2}\\
  b_{21} \cumeta{k_1+1}{1} & b_{23}\cumeta{k_1+1}{3}  & \cdots &  b_{2, 2+\ell} \cumeta{k_1+1}{\ell+2}\\
  \vdots &  \vdots & \ddots & \vdots  \\
  \cumeta{k_2}{1} &  \cumeta{k_2}{3} & \cdots & \cumeta{k_2}{\ell+2} \\
  \vdots &  \vdots & \ddots & \vdots  \\
  b_{21}^{(k_2-k_1+1)} \cumeta{k_2}{1} & b_{23}^{(k_2-k_1+1)} \cumeta{k_2}{3} & \cdots &  b_{2,2+\ell}^{(k_2-k_1+1)} \cumeta{k_2}{\ell+2} \\
  \end{pmatrix} \in \mathbb{R}^{1 + \dots + (k_2-k_1+1) \times \ell+1},
\end{equation} 
and \begin{equation*}
  N = \begin{pmatrix} 
    1 & b_{21}  & b_{21}^2 & \cdots & b_{21}^{\ell} \\
    1 & b_{23}  & b_{23}^2 & \cdots & b_{23}^{\ell} \\
    \vdots & \vdots  & \vdots & \ddots & \vdots \\
    1 & b_{2,\ell+1}  & b_{2,\ell+1}^2 & \cdots & b_{2,\ell+1}^{\ell} \\
    \end{pmatrix} \in \mathbb{R}^{\ell+1 \times \ell+1}.
\end{equation*}
Since $N$ is invertible, the rank of $A^{(k_1, \ldots, k_2)}$ coincides with the rank of $M$. The matrix $M$ has only $\ell+1$ columns, so its rank is at most $\ell+1$. 

The statement that $M$ has precisely rank $\min(\ell+1, m)$ is trivially fulfilled if $\min(\ell+1, m) =m$. So, we assume $\ell+1>m$ and show that the first $\ell+1$ rows of $M$ are linearly independent. 
To simplify notation we write $w_0, \dots, w_q = \omega^{(k_1)}, \dots, \omega^{(k_1 + q)}$ where $k_1 + q$ is the largest order appearing within the first $\ell+1$ rows of $M$. With this notation, the first $\ell+1$ rows of the matrix $M$ read as
\[w_0, w_1, w_1 b, w_2, w_2 b, w_2 b^2, w_3, \dots, w_q b^{\nu}\]
for some $\nu \leq q$. The fact that each set
\[w_j \mathcal{B}_{j} = \{w_j, w_j b, \dots, w_j b^j\}\]  is linearly independent motivates a proof by induction with the induction assumption that the set $w_0 \mathcal{B}_{0} \cup \dots \cup w_{j-1} \mathcal{B}_{j-1}$ is linearly independent. The induction base that $w_0 \mathcal{B}_{0}$ is independent is trivially fulfilled. For general $j$, we denote $\mathcal{Y}_{j-1} = \text{span}(w_0 \mathcal{B}_0 \cup \dots \cup w_{j-1} \mathcal{B}_{j-1})$, $\mathcal{X}_j = \text{span}(\mathcal{B}_j)$, and aim to show that the sum
\[\mathcal{Y}_{j-1} + w_j\mathcal{X}_j\]
is direct by employing Lemma \ref{lem:lemma_for_thm_rank_cond}. To verify its assumption b) for $\mathcal{Y}_{j-1}$, let $i=\max(\ell+1-\dim(\mathcal{Y}_{j-1}), |I|)$ and denote by $\overline{v}$ the vector with all entries with index in $[i]$ set to zero. We obtain
\begin{align*}
\mathcal{Y}_{j-1} + \mathbb{R}^{I^\comp} &\supseteq \text{span}\{e_1, \dots, e_{i}, w_0, w_1 , w_1 b, w_2 , w_2  b, w_2 b^2, w_3, \dots, w_{j-1} b^{j-1}\} \\
&= \text{span}\{e_1, \dots, e_{i}, \overline{w_0}, \overline{w_1}, \overline{w_1 b}, \overline{w_1}, \overline{w_j b}, \overline{w_j b^2}, \overline{w_j}, \dots, \overline{w_j b^{j-1}}\}\\
&= \text{span}\{e_1, \dots, e_{i}\} \oplus \text{span}\{\}\\
&= \text{span}\{e_1, \dots, e_{i}\} \oplus \text{span}\{\overline{w_j }, \overline{w_{1}} \overline{b}, \overline{w_{2}}, \overline{w_{2}} \overline{b}, \overline{w_{2}} (\overline{b})^2, \overline{w_{3}}, \dots, \overline{w_{j-1}} (\overline{b})^{j-1}\}
\end{align*} 
In the second span, we can regard $\bar{b}$ and each $\bar{w_\iota}$ as an element of $\mathbb{R}^{\ell+1-i}$  by omitting all entries that were set to zero. Note that the induction assumption holds for arbitrary $\ell'$ as long as $\ell'+1 \geq |w_0 \mathcal{B}_{0} \cup \dots \cup w_{j-1} \mathcal{B}_{j-1}|$. However, we have chosen $i$ in a way that $\ell' = \ell+1-i$ satisfies this condition. So, the induction assumption yields independence of the vectors in the second span. Therefore, \[\dim(\mathcal{Y}_{j-1} + \mathbb{R}^{I}) \geq \dim(\mathcal{Y}_{j-1}) + i = \max(\ell+1, \dim(\mathcal{Y})+|I|).\] Equality follows since the dimension of a sum of two vector spaces is always bounded by the sum of their single dimension.
Similarly, a) is fulfilled for $\mathcal{X}$. Condition b) holds as we consider the first $\ell+1$ rows of $M$.
Hence, the Lemma yields that the set of $w_j$ such that the sum $\mathcal{Y}_{j-1} + w_j\mathcal{X}_j$ is not direct is of measure $0$, or equivalently, that the sum is direct for generic $w_j$. Combining that the sum is direct, 
$w_0 \mathcal{B}_{0} \cup \dots \cup w_{j-1} \mathcal{B}_{j-1}$ is linear independent by induction assumption, and $w_j\mathcal{B}_j$ is linear independent, concludes the induction proof.

Noting that all the above arguments remain valid if $b_{21}=0$ and the coefficients are generic otherwise, and that the proofs for b) and c)  work similarly, completes the overall proof. 
\end{proof}

\begin{proof}\textbf{of Lemma \ref{lemma:marg_distr}} 
We denote the set of blocked ancestors of $w$ given $v$ as
 $$\text{bl}_v(w) = \{Z_j \in \text{an}(w): \text{all directed paths from $Z_j$ to $w$ contain $v$}\}$$ and use the shorthand $\text{an}(v, w)$ for $\text{an}(v) \cap \text{an}(w)$.
First, assume that there is  a path from $v$ to $w$. 
Then, $\text{an}(v)$ is the the disjoint union $\text{an}(v,w) \setminus \text{bl}_v(w)$ and $\text{bl}_v(w)$, similarly $\text{an}(w) = (\text{an}(v,w) \setminus \text{bl}_v(w)) \cup (\text{bl}_v(w)) \cup (\text{an}(w)\setminus\text{an}(v))$. Thus, we can write $(X_v, X_w)$ as
\begin{align*}
X_v &= \sum_{Z_j \in \text{an}(v)} b_{vj} \eta_j = \sum_{Z_j \in \text{an}(v, w) \setminus \text{bl}_v(w)} b_{vj} \eta_j + \sum_{Z_j \in \text{bl}_v(w)} b_{vj} \eta_j\\
X_w &=  \sum_{Z_j \in \text{an}(w)} b_{wj} \eta_j = \sum_{Z_j \in \text{an}(v, w)\setminus \text{bl}_v(w)} b_{wj} \eta_j + \sum_{Z_j \in \text{an}(w)\setminus\text{an}(v)} b_{wj} \eta_j + \sum_{Z_j \in \text{bl}_v(w)} b_{wj} \eta_j\\
&=  \sum_{Z_j \in \text{an}(v, w)\setminus \text{bl}_v(w)} b_{wj} \eta_j + \sum_{Z_j \in \text{an}(w)\setminus\text{an}(v)} b_{wj} \eta_j + \sum_{Z_j \in \text{bl}_v(w)} b_{wv}b_{vj} \eta_j\\
&=  \sum_{Z_j \in \text{an}(v, w)\setminus \text{bl}_v(w)} b_{wj} \eta_j + \sum_{Z_j \in \text{an}(w)\setminus\text{an}(v)} b_{wj} \eta_j + b_{wv}\left(\sum_{Z_j \in \text{bl}_v(w)} b_{vj} \eta_j\right).
\end{align*}
Now, for each $Z_j \in (\text{an}(v,w) \setminus \text{bl}_v(w)) \setminus \text{conf}(v,w)$ there exists some $\text{sw}(Z_j) \in \text{conf}(v,w)$ through which all paths from $Z_j$ to $v$ or $w$ run. Let 
$$L_i' = \eta_i  + \sum_{j: \text{sw}(Z_j) = Z_i} b_{ij}\eta_j \quad \text{ for } Z_i \in \text{conf}(v,w).$$
Then,
\begin{equation*}
    \sum_{Z_j \in \text{an}(v,w) \setminus \text{bl}_v(w)} b_{vj} \eta_j = \sum_{Z_j \in \text{conf}(v, w)} b_{vj} L_j'
\end{equation*}  
Hence, choosing $\ell' = |\text{conf}(v,w)|$,  $\{L_1', \dots, L_\ell'\}$ as mentioned, $\epsilon_1' = \sum_{j \in \text{an}(v)\setminus\text{conf}(v,w)} b_{vj} \eta_j$, 
 $\epsilon_2' = \sum_{j \in \text{an}(w)\setminus\text{an}(v)} b_{vj} \eta_j$,
 $b_{21}' = b_{vw}$, and
 $(b_{2,1+2}', \dots,b_{2,\ell+2}') =(b_{w,j}$, $Z_j \in \text{conf}(v,w))$,
the structural equations postulated by $\mathcal{M}_{2, \ell'}$ are fulfilled. If there is no path from $v$ to $w$, the proof works similarly. The only difference is that $\text{bl}_v(w)$ needs to be replaced by $\text{an}(v)\setminus \text{an}(w)$.

 If $v$ is a source, $\text{an}(v, w)\setminus \text{bl}_v(w) = \text{conf}(v,w)$, which results in the parameters given in the Lemma.
\end{proof}

\vskip 0.2in
\bibliography{references}

\begin{thebibliography}{19}
\providecommand{\natexlab}[1]{#1}
\providecommand{\url}[1]{\texttt{#1}}
\expandafter\ifx\csname urlstyle\endcsname\relax
  \providecommand{\doi}[1]{doi: #1}\else
  \providecommand{\doi}{doi: \begingroup \urlstyle{rm}\Url}\fi

\bibitem[Adams et~al.(2021)Adams, Hansen, and Zhang]{Adams2021}
Jeffrey Adams, Niels Hansen, and Kun Zhang.
\newblock Identification of partially observed linear causal models: Graphical
  conditions for the non-{G}aussian and heterogeneous cases.
\newblock \emph{Advances in Neural Information Processing Systems},
  34:\penalty0 22822--22833, 2021.

\bibitem[Auddy and Yuan(2023)]{Auddy2023}
Arnab Auddy and Ming Yuan.
\newblock Large dimensional independent component analysis: Statistical
  optimality and computational tractability, 2023.
\newblock arXiv preprint.

\bibitem[Barber et~al.(2022)Barber, Drton, Sturma, and Weihs]{BarberDSW2022}
Rina~Foygel Barber, Mathias Drton, Nils Sturma, and Luca Weihs.
\newblock Half-trek criterion for identifiability of latent variable models.
\newblock \emph{Ann. Statist.}, 50\penalty0 (6):\penalty0 3174--3196, 2022.

\bibitem[Cai et~al.(2023)Cai, Huang, Chen, Hao, and Zhang]{Cai2023}
Ruichu Cai, Zhiyi Huang, Wei Chen, Zhifeng Hao, and Kun Zhang.
\newblock Causal discovery with latent confounders based on higher-order
  cumulants.
\newblock \emph{Proceedings of the 40th International Conference on Machine
  Learning}, PMLR 202:\penalty0 3380--3407, 2023.

\bibitem[Chen et~al.(2024)Chen, Huang, Cai, Hao, and Zhang]{Chen2024}
Wei Chen, Zhiyi Huang, Ruichu Cai, Zhifeng Hao, and Kun Zhang.
\newblock Identification of causal structure with latent variables based on
  higher order cumulants.
\newblock \emph{Proceedings of the AAAI Conference on Artificial Intelligence},
  38\penalty0 (18):\penalty0 20353--20361, 2024.

\bibitem[Comon and Jutten(2010)]{Comon_2010}
Pierre Comon and Christian Jutten.
\newblock \emph{Handbook of Blind Source Separation}.
\newblock Academic Press, Oxford, 2010.

\bibitem[Entner and Hoyer(2010)]{Entner2010}
Doris Entner and Patrik~O. Hoyer.
\newblock Discovering unconfounded causal relationships using linear
  non-{G}aussian models.
\newblock In \emph{JSAI-isAI Workshops}, 2010.

\bibitem[Hoyer et~al.(2008)Hoyer, Shimizu, Kerminen, and Palviainen]{Hoyer2008}
Patrik~O. Hoyer, Shohei Shimizu, Antti~J. Kerminen, and Markus Palviainen.
\newblock Estimation of causal effects using linear non-{G}aussian causal
  models with hidden variables.
\newblock \emph{Internat. J. Approx. Reason.}, 49\penalty0 (2):\penalty0
  362--378, 2008.

\bibitem[Lee(2012)]{Lee2012}
John~M. Lee.
\newblock \emph{Introduction to Smooth Manifolds}.
\newblock Graduate Texts in Mathematics. Springer New York, NY, 2nd edition,
  2012.

\bibitem[Maathuis et~al.(2019)Maathuis, Drton, Lauritzen, and
  Wainwright]{handbook}
Marloes Maathuis, Mathias Drton, Steffen Lauritzen, and Martin Wainwright,
  editors.
\newblock \emph{Handbook of graphical models}.
\newblock Chapman \& Hall/CRC Handbooks of Modern Statistical Methods. CRC
  Press, Boca Raton, FL, 2019.

\bibitem[Maeda and Shimizu(2020)]{Maeda2020}
Takashi~Nicholas Maeda and Shohei Shimizu.
\newblock {RCD}: Repetitive causal discovery of linear non-{G}aussian acyclic
  models with latent confounders.
\newblock \emph{Proceedings of the Twenty Third International Conference on
  Artificial Intelligence and Statistics}, PMLR 108:\penalty0 735--745, 2020.

\bibitem[Salehkaleybar et~al.(2020)Salehkaleybar, Ghassami, Kiyavash, and
  Zhang]{Salehkaleybar2020}
Saber Salehkaleybar, AmirEmad Ghassami, Negar Kiyavash, and Kun Zhang.
\newblock Learning linear non-{G}aussian causal models in the presence of
  latent variables.
\newblock \emph{J. Mach. Learn. Res.}, 21\penalty0 (39):\penalty0 1--24, 2020.

\bibitem[Shimizu(2022)]{Shimizu2022}
Shohei Shimizu.
\newblock \emph{Statistical causal discovery: {L}i{NGAM} approach}.
\newblock SpringerBriefs in Statistics. Springer Japan, Tokyo, 2022.

\bibitem[Shimizu et~al.(2006)Shimizu, Hoyer, Hyv\"{a}rinen, and
  Kerminen]{Shimizu2006}
Shohei Shimizu, Patrik~O. Hoyer, Aapo Hyv\"{a}rinen, and Antti Kerminen.
\newblock A linear non-{G}aussian acyclic model for causal discovery.
\newblock \emph{J. Mach. Learn. Res.}, 7\penalty0 (72):\penalty0 2003--2030,
  2006.

\bibitem[Shimizu et~al.(2011)Shimizu, Inazumi, Sogawa, Hyv\"{a}rinen, Kawahara,
  Washio, Hoyer, and Bollen]{Shimizu2011}
Shohei Shimizu, Takanori Inazumi, Yasuhiro Sogawa, Aapo Hyv\"{a}rinen,
  Yoshinobu Kawahara, Takashi Washio, Patrik~O. Hoyer, and Kenneth Bollen.
\newblock Direct{L}i{NGAM}: a direct method for learning a linear
  non-{G}aussian structural equation model.
\newblock \emph{J. Mach. Learn. Res.}, 12\penalty0 (33):\penalty0 1225--1248,
  2011.

\bibitem[Tashiro et~al.(2014)Tashiro, Shimizu, Hyvärinen, and
  Washio]{Tashiro2014}
Tatsuya Tashiro, Shohei Shimizu, Aapo Hyvärinen, and Takashi Washio.
\newblock {ParceLiNGAM}: A causal ordering method robust against latent
  confounders.
\newblock \emph{Neural Computation}, 26\penalty0 (1):\penalty0 57--83, 2014.

\bibitem[Tramontano et~al.(2024)Tramontano, Kivva, Salehkaleybar, Drton, and
  Kiyavash]{Tramontano2024}
Daniele Tramontano, Yaroslav Kivva, Saber Salehkaleybar, Mathias Drton, and
  Negar Kiyavash.
\newblock Causal effect identification in {LiNGAM} models with latent
  confounders, 2024.
\newblock arXiv preprint.

\bibitem[Wang and Drton(2020)]{WangD2020}
Y.~Samuel Wang and Mathias Drton.
\newblock High-dimensional causal discovery under non-{G}aussianity.
\newblock \emph{Biometrika}, 107\penalty0 (1):\penalty0 41--59, 2020.

\bibitem[Wang and Drton(2023)]{Wang2023}
Y.~Samuel Wang and Mathias Drton.
\newblock Causal discovery with unobserved confounding and non-{G}aussian data.
\newblock \emph{J. Mach. Learn. Res.}, 24\penalty0 (271):\penalty0 1--61, 2023.

\end{thebibliography}

\end{document}